\documentclass{article} 
\usepackage[preprint,nonatbib]{neurips_2024}

\usepackage[utf8]{inputenc} 
\usepackage[T1]{fontenc}    
\usepackage{hyperref}       
\usepackage{url}            
\usepackage{booktabs}       
\usepackage{nicefrac}       
\usepackage{microtype}      

\title{EAPCR: A Universal Feature Extractor for Scientific Data without Explicit Feature Relation Patterns}


\newcommand{\affilone}{\textsuperscript{\dag}}
\newcommand{\affiltwo}{\textsuperscript{\ddag}}
\newcommand{\affilthree}{\textsuperscript{\S}}
\newcommand{\affilfour}{\textsuperscript{\P}}
\newcommand{\affilfive}{\textsuperscript{\textbardbl}}
\newcommand{\corrauth}{\textsuperscript{*}}

\newcommand\blfootnote[1]{%
  \begingroup
  \renewcommand\thefootnote{}\footnote{#1}%
  \addtocounter{footnote}{-1}%
  \endgroup
}

\author{%
  Zhuohang Yu\affilone, 
  Ling An\affilone, 
  Yansong Li\affiltwo, 
  Yu Wu\affilfour, 
  Zeyu Dong\affilthree, \\
  \textbf{ Zhangdi Liu }\affilone, 
  \textbf{ Le Gao }\affilfive, 
  \textbf{ Zhenyu Zhang }\affilone, 
  \textbf{ Chichun Zhou }\affilone\corrauth \\
  \affilone Dali University,
  \affiltwo University of Illinois Chicago,
  \affilthree Stony Brook University, \\
  \affilfour Rutgers University,
  \affilfive Chinese Academy of Sciences
}

\usepackage[english]{babel}
\usepackage{amsmath}
\usepackage{amssymb}
\usepackage{mathtools}
\usepackage{amsthm}
\usepackage{natbib}
\usepackage{multirow}

\usepackage{graphicx}
\usepackage[font=small]{caption}
\usepackage[font=scriptsize]{subcaption}

\usepackage{xcolor}
\definecolor{darkgreen}{rgb}{0,0.5,0}

\theoremstyle{plain}
\newtheorem{theorem}{Theorem}[section]
\newtheorem{proposition}[theorem]{Proposition}

\theoremstyle{definition}

\theoremstyle{remark}

\usepackage{acro}

\DeclareAcronym{cnn}{
    short = CNN,
    long = Convolutional Neural Network,
}

\DeclareAcronym{gnn}{
    short = GNN,
    long = Graph Neural Network,
}

\DeclareAcronym{gcn}{
    short = GCN,
    long = Graph Convolutional Network,
}

\DeclareAcronym{mlp}{
    short = MLP,
    long = Multi-layer Perceptron,
}

\DeclareAcronym{rf}{
    short = RF,
    long = Random Forest,
}

\DeclareAcronym{dt}{
    short = DT,
    long = Decision Tree,
}

\DeclareAcronym{kan}{
    short = KAN,
    long = Kolmogorov–Arnold Network,
}

\DeclareAcronym{lasso}{
    short = LASSO,
    long = Least absolute shrinkage and selection operator,
}

\DeclareAcronym{Ela}{
    short = Elastic Net,
    long = Elastic Net Regression,
}

\DeclareAcronym{FRPs}{
    short = FRPs,
    long = \textbf{F}eature \textbf{R}elation \textbf{P}atterns,
}

\DeclareAcronym{svm}{
    short = SVM,
    long = Support Vector Machine,
}

\DeclareAcronym{rnn}{
    short = RNN,
    long = Recurrent Neural Network,
}

\DeclareAcronym{bert}{
    short = BERT,
    long = Bidirectional Encoder Representation from Transformers,
}

\DeclareAcronym{gpt}{
    short = GPT,
    long = Generative Pre-trained Transformer,
}

\DeclareAcronym{vit}{
    short = ViT,
    long = Vision Transformer,
}

\DeclareAcronym{tft}{
    short = TFT,
    long = Temporal Fusion Transformers,
}

\DeclareAcronym{clip}{
    short = CLIP,
    long = Contrastive Language– Image Pre-training,
}

\DeclareAcronym{mmgcn}{
    short = MMGCN,
    long = Multimodal Graph Convolutional Network,
}

\DeclareAcronym{umap}{
    short =UMAP,
    long = Uniform Manifold Approximation and Projection,
}

\DeclareAcronym{isomap}{
    short = Isomap,
    long = Isometric Mapping,
}

\DeclareAcronym{GraphSAGE}{
    short = GraphSAGE,
    long = Graph SAmple and aggreGatE,
}

\DeclareAcronym{DGCNN}{
    short = DGCNN,
    long = Deep Graph Convolutional Neural Network,
}

\DeclareAcronym{xgboost}{
    short =XGBoost,
    long = Extreme Gradient Boosting,
}

\DeclareAcronym{NB}{
    short =NB,
    long = Naive Bayes algorithm,
}

\DeclareAcronym{knn}{
    short =KNN,
    long = \(k\)-Nearest Neighbor algorithm,
}

\begin{document}
\blfootnote{\corrauth Corresponding author: \texttt{zhouchichun@dali.edu.cn}}

\maketitle

\begin{abstract}

Conventional methods, including Decision Tree (DT)-based methods, have been effective in scientific tasks, such as non-image medical diagnostics, system anomaly detection, and inorganic catalysis efficiency prediction. However, most deep-learning techniques have struggled to surpass or even match this level of success as traditional machine learning methods. 
The primary reason is that these applications involve multi-source, heterogeneous data, where features lack explicit relationships. This contrasts with image data, where pixels exhibit spatial relationships; textual data, where words have sequential dependencies; and graph data, where nodes are connected through established associations. The absence of explicit \textbf{F}eature \textbf{R}elation \textbf{P}atterns (FRPs) presents a significant challenge for deep learning techniques in scientific applications that are not image, text, and graph-based.
In this paper, we introduce \emph{EAPCR}, a universal feature extractor designed for data without explicit FRPs. Tested across various scientific tasks, EAPCR consistently outperforms traditional methods and bridges the gap where deep learning models fall short. To further demonstrate its robustness, we synthesize a dataset without explicit FRPs. While Kolmogorov–Arnold Network (KAN) and feature extractors like Convolutional Neural Networks (CNNs), Graph Convolutional Networks (GCNs), and Transformers struggle, EAPCR excels, demonstrating its robustness and superior performance in scientific tasks without FRPs.

\end{abstract}
\section{Introduction}
In various scientific applications, such as non-image medical diagnostics, system anomaly detection, inorganic catalysis efficiency prediction, and etc., traditional machine learning techniques, such as \acf{dt}~\citep{ali2012random} and \ac{dt}-based method (e.g., \ac{rf} and \ac{xgboost}), have been reported as highly effective~\citep{cocskun2022evaluation,mutlu2023prediction,alizargar2023performance,schossler2023ensembled,mallioris2024predictive}.  In contrast, few studies have reported deep learning models as the best-performing methods, indicating that more complex deep learning techniques, such as \ac{cnn}s~\citep{lecun1998gradient}, \ac{gcn}s~\citep{kipf2016semi,bronstein2017geometric}, and Transformers ~\citep{vaswani2017attention}, have not shown the same level of success in those scientific applications.

The primary reason deep learning techniques often underperform in scientific applications is that the data in these fields differ significantly from traditional tasks like images, text, and graphs. For example, in non-image medical diagnostics, patient data come from diverse sources, such as physical measurements (e.g., weight, blood pressure) and chemical tests (e.g., glucose levels) (Fig.~\ref{fig1-motivation}-a-1). Unlike pixels in images or words in text, which have spatial or sequential relationships~\citep{lecun1998gradient,bronstein2017geometric} (Fig.~\ref{fig1-motivation}-b-1), or nodes in graphs with known connections~\citep{kipf2016semi,bronstein2017geometric} (Fig.~\ref{fig1-motivation}-b-2), features in scientific data lack such explicit relationships. This absence of explicit feature relationships is common across various scientific tasks, such as system anomaly detection~\citep{wang2023deep,tian2023fault} (Fig.~\ref{fig1-motivation}-a-2) and inorganic catalysis efficiency prediction~\citep{sun2024evaluation} (Fig.~\ref{fig1-motivation}-a-3), where features are collected from heterogeneous sources, such as electrical signals and temperature, and have different units, like pH levels and illumination time.

\begin{figure}[ht]
    \begin{center}
    \includegraphics[width=\textwidth]{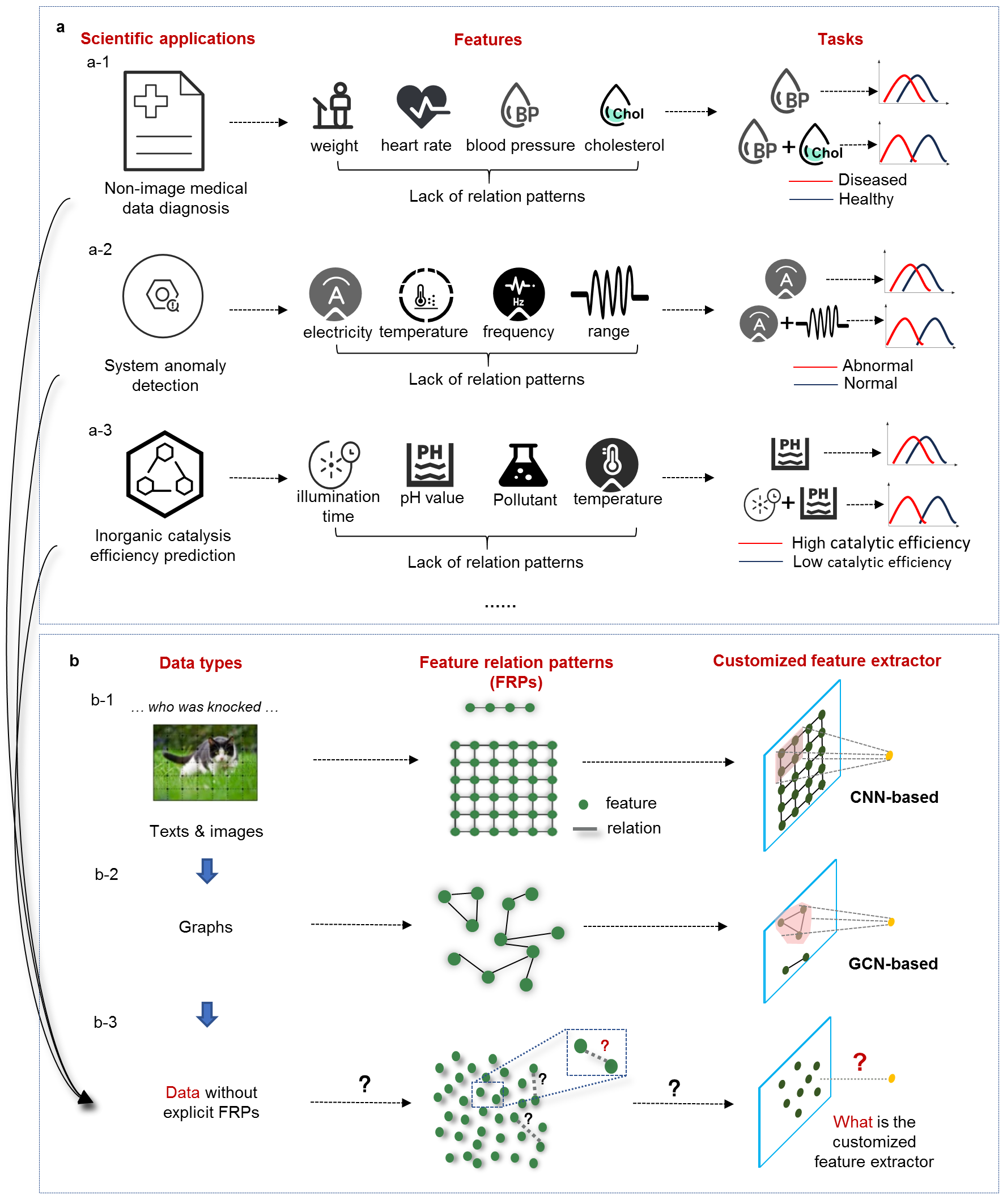}
    \end{center}
    \caption{Motivation for designing feature extractors for data without FRPs. }
    \label{fig1-motivation}
\end{figure}

The feature relationship patterns, including spatial relationships and connections, are referred to as \acf{FRPs} in this work. \ac{FRPs} are critical for deep learning-based feature extractors as they contain essential information about feature associations or correlations. For instance, in image and text data, spatial relationships between pixels and words reveal correlations where nearby features have stronger associations and distant ones weaker. This inherent relational information enables feature extractors to quickly identify important combinations of strongly interacting features. For example, \ac{cnn}s in image data leverage spatial relationships to focus on local feature patterns while ignoring irrelevant non-local ones~\citep{bronstein2017geometric,yun2023urlost} (Fig.~\ref{fig1-motivation}-b-1). Similarly, \ac{gcn}s use adjacency matrices in graph data to capture meaningful node connections~\citep{bronstein2017geometric,schlichtkrull2018modeling} (Fig.~\ref{fig1-motivation}-b-2). However, when explicit \ac{FRPs}, like spatial relationships or known connections, are absent, deep learning methods often struggle (Fig.~\ref{fig1-motivation}-b-3) as assumed \ac{FRPs} may not match the actual implicit relationships. This raises a key question:

\emph{How can we design universal feature extraction modules for data that lack explicit \ac{FRPs}?}

In this paper, we propose a feature extraction module, \emph{EAPCR}, designed as a universal feature extractor for data without explicit FRPs. Traditional feature extraction modules rely on known FRPs to distinguish between important and unimportant feature combinations. However, without the guidance of FRPs, EAPCR adopts a different approach. First, it exposes all possible \ac{FRPs}. Second, it accelerates the sampling of these combinations to ensure a wide range of feature interactions are evaluated, allowing it to effectively identify important combinations of strongly interacting features. 

To evaluate EAPCR's effectiveness, we apply it to various scientific domains, including non-image medical diagnostics~\citep{anderies2022prediction}, system anomaly detection~\citep{tian2023fault}, and inorganic catalysis efficiency prediction~\citep{liu2022data}. EAPCR consistently outperforms traditional methods in such tasks lacking explicit FRPs. To further assess its robustness, we synthesize a dataset without explicit FRPs, where models like \ac{cnn}s~\citep{lecun1998gradient}, \ac{gcn}s~\citep{kipf2016semi,bronstein2017geometric}, Transformers~\citep{vaswani2017attention}, and KAN~\citep{liu_kan_2024,liu2024kan} struggle, while EAPCR excels in capturing meaningful features.

In summary: 
\begin{itemize} 
\item EAPCR is designed as a universal feature extractor for tasks lacking explicit FRPs, addressing a critical gap in deep learning for scientific applications. It consistently outperforms other models across various real-world scientific applications. 
\item We synthesize a representative dataset to investigate the challenges of modeling without FRPs, revealing the limitations of traditional methods and validating the robustness and effectiveness of EAPCR.  \end{itemize}

\paragraph{Related Works} 

\emph{Feature engineering at the early stage:} Feature engineering~\citep{bengio2013representation} plays a crucial role in improving the accuracy of classification models.
Early approaches primarily focus on addressing feature redundancies and nonlinear relationships. For instance, principal component analysis (PCA)~\citep{abdi2010principal} reduces linear correlations, while nonlinear methods like nonlinear PCA~\citep{linting2007nonlinear} and autoencoders~\citep{bank2023autoencoders} handle redundancies through nonlinear transformations. Although classifiers such as \ac{svm}s~\citep{hearst1998support}, \ac{mlp}s~\citep{rumelhart1986learning}, and more recent models like \ac{kan}~\citep{liu_kan_2024,liu2024kan} can manage complex nonlinear feature relationships, their performance heavily depends on how input data is represented. For example, using large pre-trained models to encode images improves classification and retrieval accuracy by emphasizing critical features like edges and shapes~\citep{liu2023simple,holliday2020pre,zhou2024innovative}. Therefore, more advanced feature extraction techniques are required to go beyond capturing nonlinear relationships, further refining feature representations for better performance.

\emph{Feature engineering for data with FRPs:}
Accurately capturing implicit correlations between features is essential for effective classification. For example, determining obesity cannot solely rely on weight; height must also be considered to provide a more accurate assessment. In more complex scenarios, classification depends on interactions between features, where their joint contribution exceeds the sum of their individual effects~\citep{koh2017understanding,ali2012random,beraha2019feature,deng_discovering_2022-1}. This is why traditional classifiers, like \ac{mlp}s~\citep{rumelhart1986learning}, rely on advanced feature extractors to improve performance by identifying complex feature interactions. In this vein, recent advancements, such as ConvNeXt~\citep{woo2023convnext}, \ac{bert}~\citep{devlin2018bert}, \ac{gpt}~\citep{radford2019language}, \ac{vit}~\citep{dosovitskiy2020image}, and \ac{tft}~\citep{lim2021temporal}, efficiently capture interaction patterns of features in structured or Euclidean data like images and texts. For non-Euclidean data, techniques like manifold learning~\citep{mcinnes2018umap,tenenbaum2000global} and \ac{gnn}s, including \ac{GraphSAGE}~\citep{hamilton2017inductive} and \ac{DGCNN}~\citep{wang2019dynamic}, address unique challenges. Regardless of the data type, these methods rely on explicit \ac{FRPs} (e.g., spatial, sequential, or relational connections) of data, which contain the implicit feature correlations essential for effective feature extraction.

\emph{Feature engineering for data without FRPs:} As discussed earlier, many scientific tasks lack explicit \ac{FRPs}. Machine learning techniques like \acf{dt} and \ac{dt}-based methods perform well in these tasks by handling various data types (numerical and categorical) and automatically capturing interaction effects between features, as each decision split evaluates the relationship between a feature and the target variable~\citep{gregorutti_correlation_2017}. In contrast, deep learning models, such as \ac{cnn}s, \ac{gcn}s, and Transformers, struggle due to their reliance on predefined \ac{FRPs}. For example, in heart failure and maternal health risk prediction, the best-performing models are \ac{rf}~\citep{cocskun2022evaluation} and \ac{dt}~\citep{mutlu2023prediction}. Similarly, in hepatitis C~\citep{alizargar2023performance}, TiO$_2$ photocatalytic degradation~\citep{schossler2023ensembled}, and centrifugal pump health status prediction~\citep{mallioris2024predictive}, \ac{xgboost}, another \ac{dt}-based method, consistently outperforms deep learning methods. Despite advancements in multimodal techniques~\citep{lahat2015multimodal} and methods for non-Euclidean data~\citep{bronstein2017geometric}, feature heterogeneity does not always align with distinct modalities. Even features from the same source may lack explicit \ac{FRPs}, rendering multimodal approaches ineffective. Moreover, inconsistent feature dimensions complicate the definition of feature distances in Euclidean space. For instance, determining how a 1 kg increase in weight correlates with a change in height for a particular disease is non-trivial. Data without explicit \ac{FRPs} differ from traditional Euclidean and non-Euclidean data~\citep{bronstein2017geometric}, making deep learning techniques less effective in these applications.

\section{EAPCR: a feature extractor without the need of explicit feature relation patterns} \label{sec:gen_inst}

In many applications, identifying feature combinations with strong interactions is not difficult, as feature extraction modules like CNNs and GCNs use predefined \ac{FRPs} to narrow the range of possible combinations. For example, CNNs leverage spatial relationships between pixels to efficiently sample local regions and filter out weak feature interactions, quickly identifying critical patterns like textures in image recognition. However, when explicit \ac{FRPs} are absent, searching for important feature interactions becomes more random and inefficient. Traditional methods often fail in these cases because the \ac{FRPs} chosen by feature extraction modules may not align with the implicit patterns in the data. Additionally, the sample complexity increases exponentially with the number of features, making exhaustive search impractical. Unlike these traditional approaches, EAPCR exposes all feature combinations to ensure no fundamental interaction patterns of features are missed (Fig.~\ref{fig:proposed method}-b), then optimizes the efficiency of combination sampling to address this challenge (Fig.~\ref{fig:proposed method}-c).

Further discussion about why FRPs is important gives in App.~\ref{complex analysis}, where the relationship between feature interaction, feature correlation, and FRPs is discussed.

\subsection{Expose possible feature relations: Embedding and bilinear Attention} 
\label{indexfy}

Unlike existing works that use one-hot encoding for categorical feature representation~\citep{seger2018investigation} and bilinear attention to focus on interactions between two input modalities~\citep{fukui2016multimodal}, we leverage \textbf{\textcolor{red}{E}}mbedding~\citep{mikolov2013efficient} and bilinear \textbf{\textcolor{red}{A}}ttention~\citep{kim2018bilinear} to construct a correlation matrix. 

For an input with $N$ features, we first convert each feature into a categorical (string-based) one. Categorical features, like gender or catalyst substrate, remain unchanged, while numerical features are discretized into categories (e.g., ``high'', ``medium'', ``low'') based on context-specific thresholds. These thresholds are chosen to balance between overly fine granularity, which leads to sparse categories, and overly coarse granularity, which reduces feature separation. For example, temperature might be categorized into ``very high'', ``high'', ``medium'', ``low'', and ``very low''. This process generates an input matrix of shape $[N, 1]$, where each element is an integer index, assigned via a dictionary mapping categorical values to indices.

\begin{figure}[!ht]
    \begin{center}
    \includegraphics[width=\textwidth]{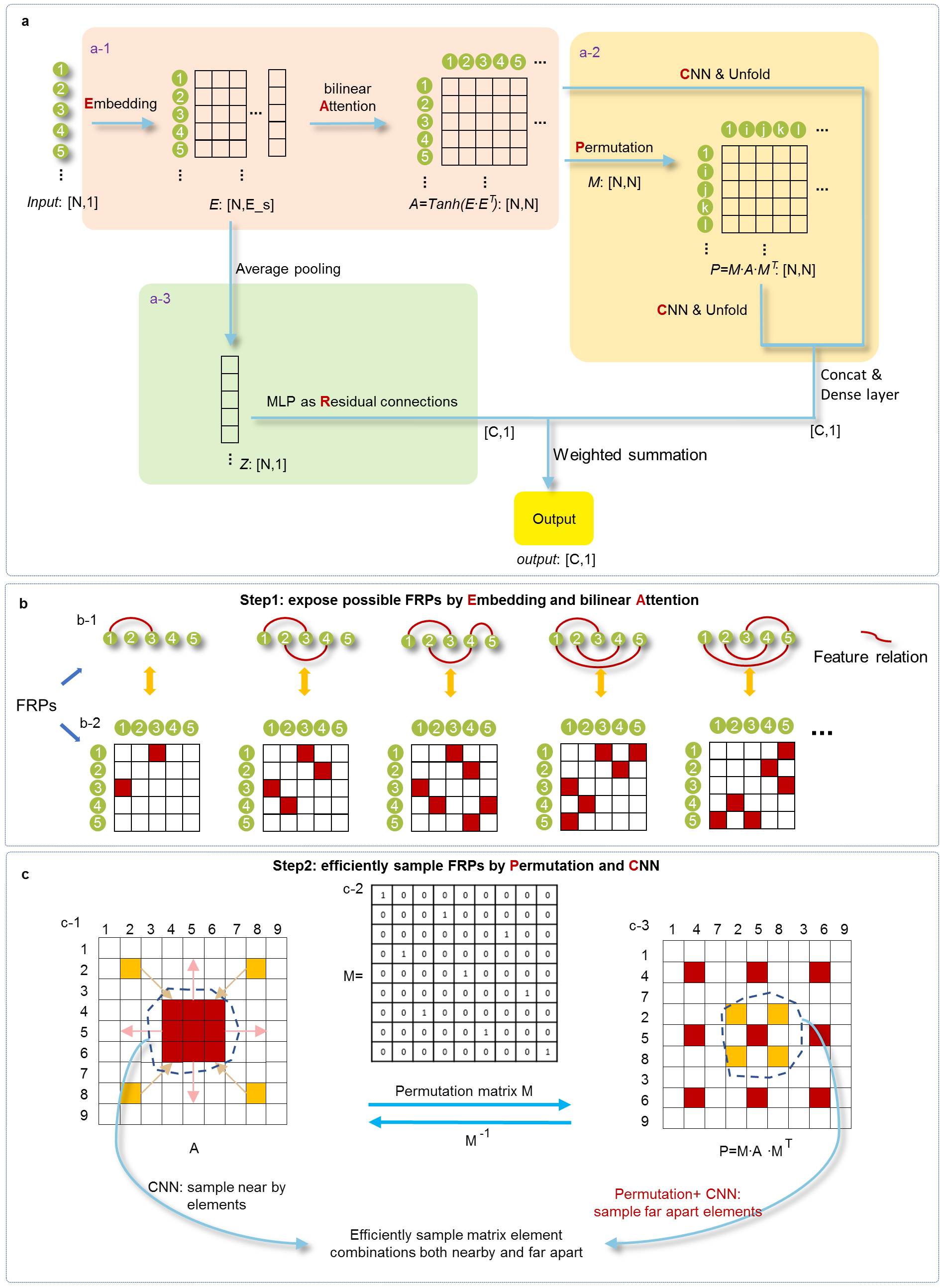}
    \end{center}
    \caption{The illustration of the method. (a) an overview of EAPCR. (b) an illustration of how Embedding and bilinear Attention can expose all possible \ac{FRPs}. (c) an illustration of how the permuted \ac{cnn}s considers combinations of originally close matrix elements as well as combinations of originally distant elements.}
    \label{fig:proposed method}
\end{figure}

The embedding operation substitutes each component by a corresponding dense vector, giving $E$ with shape $[N,E_{s}]$, where $E_{s}$ is the embedding size. Then, we consider the bilinear attention defined as: 
\begin{equation} \label{eq1}
    A = \text{Tanh} (E E^{\top}),
\end{equation}
where the matrix \(A\) with shape $[N,N]$ is the constructed correlation matrix (Fig.~\ref{fig:proposed method}-a-1). $\text{Tanh}(x)$ is Hyperbolic Tangent function. The matrix \(A\) is important, because each element in the matrix represents the relationship between two features, with any combination of these elements exposing corresponding potential FRPs. Thus, by using matrix $A$, all possible relation patterns between features are being exposed in a matrix form, as shown in Fig.~\ref{fig:proposed method}-b. The target combinations that consist of features with strong interactions are encapsulated within the matrix \(A\).

Additionally, the embedding process not only captures feature relationships but also helps the model understand nuances across multi-source heterogeneous features. 

\subsection{Efficiently sample feature relations: permuted \ac{cnn}} \label{permutation}

To identify the target combinations that consist of features with strong interactions within the matrix \(A\), we leverage \ac{cnn} to sample such combinations of elements in matrix \(A\) defined in \eqref{eq1}. Because \ac{cnn}s can efficiently focus on local elements within a matrix
~\citep{bronstein2017geometric}. 
Instead of expanding the receptive field of \ac{cnn}s by increasing kernel size or layers, we propose a \textbf{\textcolor{red}{P}}ermuted \textbf{\textcolor{red}{C}}NN that efficiently samples diverse element combinations from the matrix \(A\). 

\textit{The designed permutation matrix \(M\)}. The designed permutation \(M\) rearranges the matrix elements of \(A\), bringing originally distant elements closer and pushing originally close elements further apart, as shown in Fig.~\ref{fig:proposed method}-c. The details of constructing the designed permutation matrix \(M\) can be found in App.~\ref{appendix_Generate_M}. We apply a designed permutation matrix \(M\) on \(A\) giving a new matrix  \(P\), defined as:
\begin{equation} \label{eq2}
    P \triangleq M A M^{\top}.
\end{equation} 

\textit{The permuted CNN}. The permuted \ac{cnn} architecture is designed to capture both local and non-local relationships within the matrix elements by applying the \ac{cnn} to two different representations of the matrix \( A \), as illustrated in Fig.~\ref{fig:proposed method}-a-2. Specifically, the \ac{cnn} is applied to the raw matrix \( A \) as well as to a permuted version of \( A \), that is \( P \) in \eqref{eq2}. The \ac{cnn} varies on different tasks but is only equipped with a lightweight structure (e.g., a two-layered architecture with kernel sizes of $3 \times 3$ and channel numbers of $8$ and $16$). Afterward, the outputs from the \ac{cnn} operations on both the raw matrix \( A \) and the permuted matrix \( P \) are transformed into vectors. These vectors are then concatenated, resulting in a single feature vector of size \([C, 1]\) by dense connection, where \( C \) is the number of classes.

Moreover, by incorporating existing feature extraction modules, such as \ac{mlp}, as a \textbf{\textcolor{red}{R}}esidual connection, we combine the strengths of different feature extractors to further enhance feature extraction efficiency. As shown in Fig.~\ref{fig:proposed method}-a-3, the average pooling is applied on $E$, resulting in a vector $z$ with shape $[N,1]$. Then, an \ac{mlp} transforms $z$ into a vector with shape $[C,1]$. Therefore, the residual networks also help the model better train the embedding vectors.

\section{Main experiment results}

In Sec.~\ref{sec:bench_sci}, we present experimental results on real-world applications, including \emph{non-image medical diagnostics}, \emph{inorganic catalysis efficiency prediction}, and \emph{system anomaly detection}. 
The EAPCR can also be used in a wider range of scientific applications beyond experiments mentioned in this section; see Tab.~\ref{app.result_1}, Tab.~\ref{app.result_2}, Tab.~\ref{app.result_3}, Tab.~\ref{app.result_4},  and Tab.~\ref{app.result_5} in App.~\ref{Scientific tasks} for more discussion. 
In summary, the experiment results show that EAPCR as a deep learning method outperforms traditional deep/non-deep methods, including the SOTA \ac{dt}-based methods,
demonstrating the effectiveness of the proposed EAPCR method for tasks involving data without explicit FRPs.

In Sec.~\ref{sec:main_result}, to investigate the challenges of modeling without FRPs, we construct an illustrative dataset lacking explicit FRPs. The results reveal the limitations of traditional deep learning methods, such as CNN~\citep{lecun1998gradient}, GCN~\citep{kipf2016semi,bronstein2017geometric}, Transformer~\citep{vaswani2017attention}, MLP~\citep{rumelhart1986learning}, and KAN~\citep{liu2024kan}, which are often underutilized in scientific tasks. We then validate the robustness and effectiveness of EAPCR. This dataset highlights the shortcomings of traditional models in tasks that lack explicit FRPs while demonstrating EAPCR's superior performance in such scenarios. 

\subsection{Benchmarking comparison of EAPCR on Scientific tasks} \label{sec:bench_sci}

\paragraph{Non-image medical diagnostics:}

In this study, we validated the proposed EAPCR method using the UCI Cleveland heart disease dataset~\citep{heart_disease_45}.
The dataset consists of 13 feature attributes (including Sex, Age, and CP [Chest Pain] Type) and one categorical attribute, comprising eight categorical (string-based) and five numerical features (details in App.~\ref{Scientific tasks}). Following the data processing approach outlined in Sec.~\ref{indexfy}, all features were converted to categorical form, and a dictionary was created to map each categorical feature to an integer index. These indices were then mapped to 128 dimensions using an embedding mechanism and fed into the model for training. We benchmarked the EAPCR method against various machine learning algorithms, including \ac{svm}, \ac{NB}, logistic regression, \ac{dt}, and \ac{knn}, as presented in Tab.~\ref{result1}, based on the study by Anderies et al.~\citep{anderies2022prediction}. The results show that EAPCR outperforms conventional machine learning techniques. 

\begin{table}[!htb]
\caption{Comparison of our method with others in the diagnosis of non-image medical data.}
\label{result1}
\begin{center}
\begin{tabular}{cccccc}
\toprule
Method  & Accuracy    & Precision  & Recall     & F1 Score \\
\midrule
\ac{dt}  & 70\% & 86\%       & 63\%       & 72\%     \\
\ac{knn}  & 78\% & 90\%       & 74\%       & 81\%     \\
Logistic Regression & 83\%     & 94\%    & 79\% & 86\%     \\
\ac{NB}         & 83\%      & 96\%   & 76\% & 85\%     \\
\ac{svm}        & 85\%      & 97\%   & 79\% & 87\%     \\
\textbf{EAPCR}  & \textbf{ 93\%} & \textbf{ 97\%} & \textbf{ 92\%} & \textbf{ 94\%} \\
\bottomrule
\end{tabular}
\end{center}
\end{table}

\paragraph{Inorganic catalysis efficiency prediction:} 
In this study, we used the TiO$_2$ photocatalysts dataset from \citet{liu2022data} to evaluate the performance of our EAPCR model in inorganic catalysis applications. The dataset contains nine attributes, including dopant, molar ratio, calcination temperature, and pollutant, with a total of 760 samples (details in App.~\ref{Scientific tasks}). Using a 256-dimensional embedding, we benchmarked EAPCR against LightGBM, the best-performing model in \citet{liu2022data}. As shown in Tab.~\ref{result2}, EAPCR outperforms LightGBM in terms of the $R^2$ metric. 

\begin{table}[!htb]
\caption{Comparison of our method with others in the Inorganic catalysis data set.}
\label{result2}
\begin{center}
\begin{tabular}{ccccc}
\toprule
Model & MAE & MSE & RMSE & $R^2$   \\
\midrule
Linear Regression  & \(0.513 \pm 0.104\)   & \(0.601 \pm 0.237\) & \(0.762 \pm 0.401\)  & \(0.048 \pm 0.014\)    \\
\ac{rf}   & \(0.235 \pm 0.062\)  & \(0.180 \pm 0.126\)  & \(0.417 \pm 0.148\) & \(0.805 \pm 0.035\)   \\
\ac{xgboost}  & \(0.145 \pm 0.103\)   & \(0.086 \pm 0.034\)  & \(0.293 \pm 0.136\)  & \(0.884 \pm 0.024\)   \\
LightGBM~\citep{liu2022data} & /  & / & / & 0.928   \\
\textbf{EAPCR} & \(\textbf{0.128} \pm \textbf{0.003}\) & \(\textbf{0.041} \pm \textbf{0.001}\) & \(\textbf{0.203} \pm \textbf{0.004}\) &\( \textbf{0.937} \pm \textbf{0.003}\) \\
\bottomrule
\end{tabular}
\end{center}
\end{table}

\paragraph{System anomaly detection:}
In our study on system anomaly detection, we used sensor data from the Kaggle dataset (https://www.kaggle.com/datasets/umerrtx/machine-failure-prediction-using-sensor-data), which is designed for predicting machine failures in advance. 
The dataset consists of 944 samples with nine feature attributes, including sensor readings like footfall and tempMode, and a binary target attribute (1 for failure, 0 for no failure) (details in App.~\ref{Scientific tasks}). Using a 64-dimensional embedding, we benchmarked our model against \ac{rf}, Logistic Regression, \ac{svm}, and Gradient Boosting. As shown in Tab.~\ref{result3}, our model outperformed the others across all metrics. 

\begin{table}[!htb]
\caption{Comparison of our method with others in the Sensor data-based system anomaly detection tasks}
\label{result3}
\begin{center}
\begin{tabular}{cccccc}
\toprule
Algorithm    & Accuracy  & Precision & Recall& F1-score   \\
\midrule
\ac{rf}      & 87.83\%   & /         & /     & /          \\
Logistic Regression & 87.83\%   & /   & /     & /          \\
\ac{svm}           & 87.83\%   & /   & /     & /          \\
Gradient Boosting   & 88.89\%   & 87.50\%   & 88.51\% & 88.00\%  \\
\textbf{EAPCR}     & \textbf{ 89.42\%} & \textbf{ 87.64\%} & \textbf{ 89.66\%} & \textbf{ 88.64\%} \\
\bottomrule
\end{tabular}
\end{center}
\end{table}

\subsection{Benchmarking comparison of EAPCR on synthesized data} \label{sec:main_result}

\paragraph{Synthesize data without explicit FRPs.}
In image recognition, the correlation between pixels and their spatial positions is typically consistent, with nearby pixels having higher correlations and distant ones having lower correlations (see App.~\ref{Relationship}). Previous research~\citep{yun2023urlost} shows that random transformations can disrupt this spatial relationship, such as by shuffling pixel positions. However, our approach differs by using a carefully designed permutation, as outlined in Sec.~\ref{permutation}, which strategically moves adjacent matrix elements further apart and brings distant elements closer (Fig.~\ref{fig:results}-a). As an example, we used the handwritten digit recognition MNIST dataset~\citep{lecun1998gradient} to generate data without explicit FRPs, see Figs.~\ref{fig:results}-b and ~\ref{fig:results}-c.

This designed permutation more effectively disrupts spatial relationships while visually maintaining them, thereby concealing underlying \ac{FRPs} more thoroughly compared to random permutations~\citep{yun2023urlost}. The proof can be found in Fig.~\ref{fig:results}-b, where CNN performance degrades progressively, demonstrating that the designed permutation more effectively breaks spatial correlations. This is why we applied the designed permutation when synthesizing the dataset.

Although both the permuted CNN in EAPCR and the synthesized data use a permutation matrix that disrupts the original spatial relationships, they are not identical. For instance, the synthesized data matrix is $[28,28]$, while the model’s matrix is $[784,784]$, ensuring the model has no prior knowledge of hidden \ac{FRPs} and preventing leakage. Further evidence comes in Fig.~\ref{fig:results}-b, where EAPCR performs consistently across raw data, randomly permuted data, and data with designed permutation. 

We conducted extensive experiments on both the original data and the synthesized data, using existing machine learning models, mainstream deep learning models, and our proposed EAPCR model. For convenience, we refer to these two datasets as the ``raw image  (data with predefined FRPs)'' and ``synthesized data (data without explicit FRPs),'' since the format of the synthesized data is no longer important. 

\paragraph{Performance of different models on data with FRPs and synthesized data without explicit FRPs.}
We summarize the results of this section in Fig.~\ref{fig:results}-c, where the horizontal axis represents the parameters of different models and the vertical axis shows the accuracy of these models on Data With/Without explicit FRPs. Fig.~\ref{fig:results}-c is essential to understanding the limitations of traditional methods and the robustness and effectiveness of EAPCR. 

\emph{\ac{mlp} as the baseline}.
The \ac{mlp} is a basic neural network architecture that uses fully connected layers to process input data. As shown in Fig.~\ref{fig:results}-c-1, \ac{mlp} performs similarly on tasks with and without explicit FRPs. Thus, we use \ac{mlp}'s performance as a baseline to assess the feature extraction effectiveness of other models. Models that exceed \ac{mlp}'s performance demonstrate effective feature extraction capabilities, while those with similar or lower performance indicate insufficient feature extraction.

\emph{EAPCR demonstrated superior performance}. Interestingly, on raw data, both the CNN (CNN+MLP) and the ensemble model CNN+KAN exhibited strong performance, as shown in Fig.~\ref{fig:results}-c-3 and Fig.~\ref{fig:results}-c-8, respectively. However, when evaluated on synthesized data, only EAPCR and EACR (the ablation model of EAPCR where permuted CNN is eliminated) obviously outperforms the MLP, with results depicted in Fig.~\ref{fig:results}-c-7 for EAPCR and Fig.~\ref{fig:results}-c-6 for EACR. Notably, EAPCR achieved $94.5\%$ accuracy with 37,355 parameters, outperforming its ablation model EACR, which achieved $94.3\%$ accuracy with 59,361 parameters. Other models, such as the \ac{rf}, \ac{cnn}, and \ac{gcn} (uses the data's correlation matrix as its adjacency matrix), only slightly surpassed the MLP. Their performances are shown in Fig.~\ref{fig:results}-c-2 for CNN, Fig.~\ref{fig:results}-c-4 for GCN, and Fig.~\ref{fig:results}-c-5 for RF. It is important to note that the GCN's performance declined significantly when provided with a randomly generated adjacency matrix instead of the correlation matrix (Fig.~\ref{fig:results}-c-4). Transformer~\citep{vaswani2017attention} (Fig.~\ref{fig:results}-c-11) and KAN~\citep{liu2024kan} (Figs.~\ref{fig:results}-c-10 and ~\ref{fig:results}-c-9) do not exhibit obvious impact on synthesized data without FRPs.
More details of the models' parameter settings are given in the Tabs.~\ref{parameters} and \ref{other_model_parameters} and in App.~\ref{result_sythesized_appendix}.

\emph{EAPCR recovers hidden FRPs.}
The reason EAPCR demonstrates supreme performance compared to other methods is that it successfully reconstructs the implicit FRPs in the synthesized dataset. We verify this by comparing the correlation matrix recovered by EAPCR with the original correlation matrix. The details about the result are shown in App.~\ref{sec:interp}.

\begin{figure}[!ht]
    \begin{center}
    \includegraphics[width=0.95\textwidth]{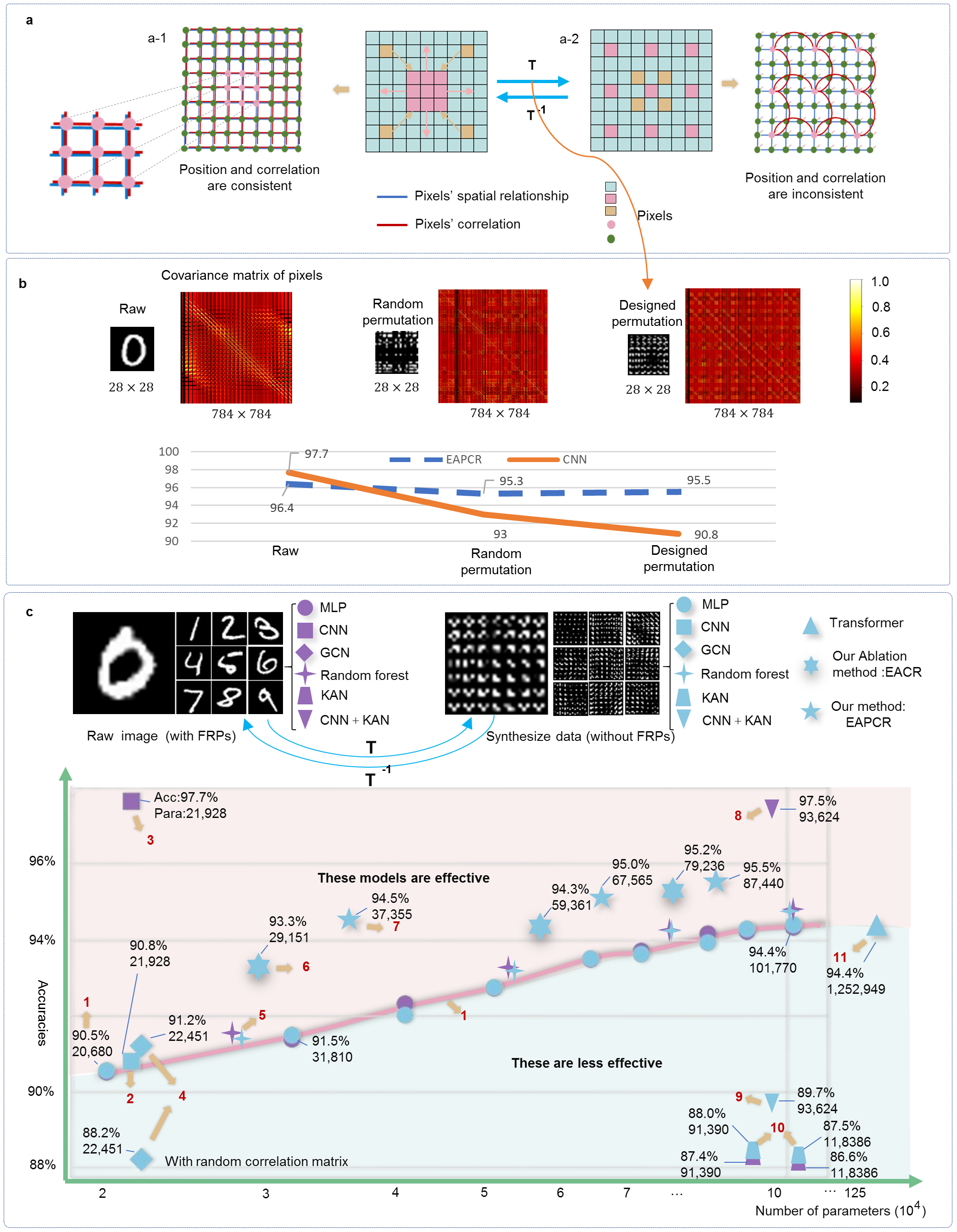}
    \end{center}
    \caption{Results on synthesized data without explicit \ac{FRPs}. (a) Illustration of synthesized data: In the raw image, pixel correlations align with spatial positions (a-1), but in the synthesized data, the spatial correlations breaks (a-2). (b) Experimental results confirm that EAPCR's strong performance is not due to the use of a similarly designed permutation when synthesizing the data. (c) Comparison of various methods on data with \ac{FRPs} and synthesized data without explicit \ac{FRPs}.}
    \label{fig:results}
\end{figure}

\subsection{Robustness and ablation study} \label{ablation}
To further validate the robustness of EAPCR, we synthesized additional datasets beyond the MNIST dataset to Flower~\citep{nilsback2008automated}, ImageNet~\citep{deng2009imagenet}, and CIFAR-10~\citep{Krizhevsky2009LearningML} datasets. We also conducted experiments comparing EAPCR to EACR, an ablation model without permutation but with larger convolutional kernels and more layers, to demonstrate that EAPCR’s permuted CNN is more effective than simply increasing the capacity of standard CNNs. In addition, experiments in App.~\ref{sec:randompermu} show that the permutation we designed in the permuted CNN outperforms a random one. The results (Fig.~\ref{fig:ablation}) show: 1) traditional \ac{cnn}-based algorithms, such as ConvNeXt-V2~\citep{woo2023convnext}, perform well on data with \ac{FRPs} but fail on data without explicit \ac{FRPs}; 2) EAPCR consistently performs well across all three synthesized datasets without explicit \ac{FRPs}; and 3) EAPCR outperforms EACR, even though EACR uses larger kernels and more layers.

\begin{figure}[!ht]
    \begin{center}
    \includegraphics[width=0.95\textwidth]{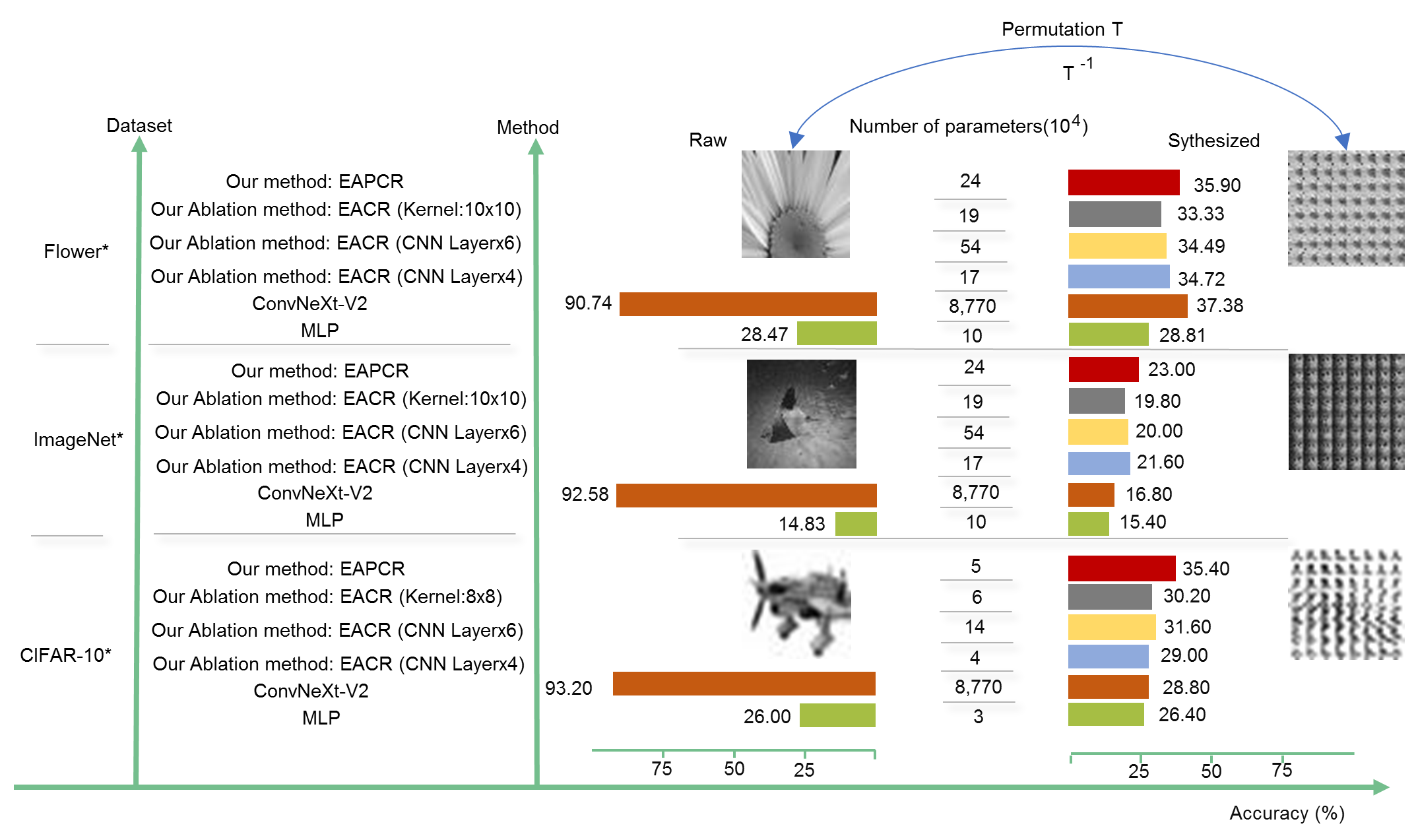}
    \end{center}
    \caption{The comparison and ablation experiments conducted on the more synthesized data. Asterisks indicate that only subsets of gray data were used for experimental efficiency. This involves employing limited random sampling to create combined training and testing datasets. For example, from ImageNet, 10 categories, each comprising 400 randomly selected images, were selected for training and 50 for testing, optimizing both time and computational resources.}
    \label{fig:ablation}
\end{figure}
\section{Conclusion}

The absence of explicit Feature Relation Patterns (FRPs) presents a significant challenge in many scientific tasks that are often overlooked by the ML community. This limitation contributes to the underperformance of deep learning methods compared to traditional methods, such as Decision Tree-based (DT-based) machine learning approaches, in scientific applications.
To address this issue, we introduce an innovative method, EAPCR, specifically designed for data lacking FRPs. We evaluate the effectiveness and efficiency of EAPCR across a variety of real-world scientific tasks and demonstrate that it consistently outperforms established methods on several scientific datasets.
%
Additionally, we synthesize a dataset that deliberately excludes explicit FRPs to further assess the performance of EAPCR. The results demonstrate that EAPCR outperforms CNN, GCN, MLP, RF, Transformer, and KAN on the dataset without explicit FRPs. Our findings underscore the potential of EAPCR as a robust solution for scientific tasks lacking explicit FRPs, bridging the gap where
deep learning models fall short and paving the way for enhanced data analysis in this domain.

\bibliographystyle{apalike}
\bibliography{iclr2025_conference, yw_do_not_change, yansong_added}

\begin{thebibliography}{54}
\providecommand{\natexlab}[1]{#1}
\providecommand{\url}[1]{\texttt{#1}}
\expandafter\ifx\csname urlstyle\endcsname\relax
  \providecommand{\doi}[1]{doi: #1}\else
  \providecommand{\doi}{doi: \begingroup \urlstyle{rm}\Url}\fi

\bibitem[Abdi \& Williams(2010)Abdi and Williams]{abdi2010principal}
Herv{\'e} Abdi and Lynne~J Williams.
\newblock Principal component analysis.
\newblock \emph{Wiley interdisciplinary reviews: computational statistics}, 2\penalty0 (4):\penalty0 433--459, 2010.

\bibitem[Ali et~al.(2012)Ali, Khan, Ahmad, and Maqsood]{ali2012random}
Jehad Ali, Rehanullah Khan, Nasir Ahmad, and Imran Maqsood.
\newblock Random forests and decision trees.
\newblock \emph{International Journal of Computer Science Issues (IJCSI)}, 9\penalty0 (5):\penalty0 272, 2012.

\bibitem[Alizargar et~al.(2023)Alizargar, Chang, and Tan]{alizargar2023performance}
Azadeh Alizargar, Yang-Lang Chang, and Tan-Hsu Tan.
\newblock Performance comparison of machine learning approaches on hepatitis c prediction employing data mining techniques.
\newblock \emph{Bioengineering}, 10\penalty0 (4):\penalty0 481, 2023.

\bibitem[Anderies et~al.(2022)Anderies, Tchin, Putro, Darmawan, and Gunawan]{anderies2022prediction}
Anderies Anderies, Jalaludin Ar Raniry~William Tchin, Prambudi~Herbowo Putro, Yudha~Putra Darmawan, and Alexander Agung~Santoso Gunawan.
\newblock Prediction of heart disease uci dataset using machine learning algorithms.
\newblock \emph{Engineering, MAthematics and Computer Science Journal (EMACS)}, 4\penalty0 (3):\penalty0 87--93, 2022.

\bibitem[Bank et~al.(2023)Bank, Koenigstein, and Giryes]{bank2023autoencoders}
Dor Bank, Noam Koenigstein, and Raja Giryes.
\newblock Autoencoders.
\newblock \emph{Machine learning for data science handbook: data mining and knowledge discovery handbook}, pp.\  353--374, 2023.

\bibitem[Bengio et~al.(2013)Bengio, Courville, and Vincent]{bengio2013representation}
Yoshua Bengio, Aaron Courville, and Pascal Vincent.
\newblock Representation learning: A review and new perspectives.
\newblock \emph{IEEE transactions on pattern analysis and machine intelligence}, 35\penalty0 (8):\penalty0 1798--1828, 2013.

\bibitem[Beraha et~al.(2019)Beraha, Metelli, Papini, Tirinzoni, and Restelli]{beraha2019feature}
Mario Beraha, Alberto~Maria Metelli, Matteo Papini, Andrea Tirinzoni, and Marcello Restelli.
\newblock Feature selection via mutual information: New theoretical insights.
\newblock In \emph{2019 international joint conference on neural networks (IJCNN)}, pp.\  1--9. IEEE, 2019.

\bibitem[Bronstein et~al.(2017)Bronstein, Bruna, LeCun, Szlam, and Vandergheynst]{bronstein2017geometric}
Michael~M Bronstein, Joan Bruna, Yann LeCun, Arthur Szlam, and Pierre Vandergheynst.
\newblock Geometric deep learning: going beyond euclidean data.
\newblock \emph{IEEE Signal Processing Magazine}, 34\penalty0 (4):\penalty0 18--42, 2017.

\bibitem[Co{\c{s}}kun \& Kuncan(2022)Co{\c{s}}kun and Kuncan]{cocskun2022evaluation}
Cevdet Co{\c{s}}kun and Fatma Kuncan.
\newblock Evaluation of performance of classification algorithms in prediction of heart failure disease.
\newblock \emph{Kahramanmara{\c{s}} S{\"u}t{\c{c}}{\"u} {\.I}mam {\"U}niversitesi M{\"u}hendislik Bilimleri Dergisi}, 25\penalty0 (4):\penalty0 622--632, 2022.

\bibitem[Deng et~al.(2022)Deng, Ren, Zhang, and Zhang]{deng_discovering_2022-1}
Huiqi Deng, Qihan Ren, Hao Zhang, and Quanshi Zhang.
\newblock Discovering and {{Explaining}} the {{Representation Bottleneck}} of {{DNNs}}, November 2022.

\bibitem[Deng et~al.(2009)Deng, Dong, Socher, Li, Li, and Fei-Fei]{deng2009imagenet}
Jia Deng, Wei Dong, Richard Socher, Li-Jia Li, Kai Li, and Li~Fei-Fei.
\newblock Imagenet: A large-scale hierarchical image database.
\newblock In \emph{2009 IEEE conference on computer vision and pattern recognition}, pp.\  248--255. Ieee, 2009.

\bibitem[Devlin(2018)]{devlin2018bert}
Jacob Devlin.
\newblock Bert: Pre-training of deep bidirectional transformers for language understanding.
\newblock \emph{arXiv preprint arXiv:1810.04805}, 2018.

\bibitem[Dosovitskiy(2020)]{dosovitskiy2020image}
Alexey Dosovitskiy.
\newblock An image is worth 16x16 words: Transformers for image recognition at scale.
\newblock \emph{arXiv preprint arXiv:2010.11929}, 2020.

\bibitem[Fukui et~al.(2016)Fukui, Park, Yang, Rohrbach, Darrell, and Rohrbach]{fukui2016multimodal}
Akira Fukui, Dong~Huk Park, Daylen Yang, Anna Rohrbach, Trevor Darrell, and Marcus Rohrbach.
\newblock Multimodal compact bilinear pooling for visual question answering and visual grounding.
\newblock \emph{arXiv preprint arXiv:1606.01847}, 2016.

\bibitem[Gao et~al.(2023)Gao, Yu, Wang, Hou, Zhang, Zhou, and Wu]{gao2023new}
Le~Gao, Zhuohang Yu, Shengjie Wang, Yuejie Hou, Shouchang Zhang, Chichun Zhou, and Xin Wu.
\newblock A new paradigm in lignocellulolytic enzyme cocktail optimization: Free from expert-level prior knowledge and experimental datasets.
\newblock \emph{Bioresource Technology}, 388:\penalty0 129758, 2023.

\bibitem[Gregorutti et~al.(2017)Gregorutti, Michel, and {Saint-Pierre}]{gregorutti_correlation_2017}
Baptiste Gregorutti, Bertrand Michel, and Philippe {Saint-Pierre}.
\newblock Correlation and variable importance in random forests.
\newblock \emph{Statistics and Computing}, 27\penalty0 (3):\penalty0 659--678, May 2017.
\newblock ISSN 1573-1375.
\newblock \doi{10.1007/s11222-016-9646-1}.

\bibitem[Hamilton et~al.(2017)Hamilton, Ying, and Leskovec]{hamilton2017inductive}
Will Hamilton, Zhitao Ying, and Jure Leskovec.
\newblock Inductive representation learning on large graphs.
\newblock \emph{Advances in neural information processing systems}, 30, 2017.

\bibitem[Hearst et~al.(1998)Hearst, Dumais, Osuna, Platt, and Scholkopf]{hearst1998support}
Marti~A. Hearst, Susan~T Dumais, Edgar Osuna, John Platt, and Bernhard Scholkopf.
\newblock Support vector machines.
\newblock \emph{IEEE Intelligent Systems and their applications}, 13\penalty0 (4):\penalty0 18--28, 1998.

\bibitem[Holliday \& Dudek(2020)Holliday and Dudek]{holliday2020pre}
Andrew Holliday and Gregory Dudek.
\newblock Pre-trained cnns as visual feature extractors: A broad evaluation.
\newblock In \emph{2020 17th conference on computer and robot vision (CRV)}, pp.\  78--84. IEEE, 2020.

\bibitem[Janosi et~al.(1989)Janosi, Steinbrunn, Pfisterer, and Detrano]{heart_disease_45}
Andras Janosi, William Steinbrunn, Matthias Pfisterer, and Robert Detrano.
\newblock {Heart Disease}.
\newblock UCI Machine Learning Repository, 1989.
\newblock {DOI}: https://doi.org/10.24432/C52P4X.

\bibitem[Kim et~al.(2018)Kim, Jun, and Zhang]{kim2018bilinear}
Jin-Hwa Kim, Jaehyun Jun, and Byoung-Tak Zhang.
\newblock Bilinear attention networks.
\newblock \emph{Advances in neural information processing systems}, 31, 2018.

\bibitem[Kipf \& Welling(2016)Kipf and Welling]{kipf2016semi}
Thomas~N Kipf and Max Welling.
\newblock Semi-supervised classification with graph convolutional networks.
\newblock \emph{arXiv preprint arXiv:1609.02907}, 2016.

\bibitem[Koh \& Liang(2017)Koh and Liang]{koh2017understanding}
Pang~Wei Koh and Percy Liang.
\newblock Understanding black-box predictions via influence functions.
\newblock In \emph{International conference on machine learning}, pp.\  1885--1894. PMLR, 2017.

\bibitem[Krizhevsky \& Hinton(2009)Krizhevsky and Hinton]{Krizhevsky2009LearningML}
Alex Krizhevsky and Geoffrey Hinton.
\newblock Learning multiple layers of features from tiny images.
\newblock Technical Report~0, University of Toronto, Toronto, Ontario, 2009.
\newblock URL \url{https://www.cs.toronto.edu/~kriz/learning-features-2009-TR.pdf}.

\bibitem[Lahat et~al.(2015)Lahat, Adali, and Jutten]{lahat2015multimodal}
Dana Lahat, T{\"u}lay Adali, and Christian Jutten.
\newblock Multimodal data fusion: an overview of methods, challenges, and prospects.
\newblock \emph{Proceedings of the IEEE}, 103\penalty0 (9):\penalty0 1449--1477, 2015.

\bibitem[LeCun et~al.(1998)LeCun, Bottou, Bengio, and Haffner]{lecun1998gradient}
Yann LeCun, L{\'e}on Bottou, Yoshua Bengio, and Patrick Haffner.
\newblock Gradient-based learning applied to document recognition.
\newblock \emph{Proceedings of the IEEE}, 86\penalty0 (11):\penalty0 2278--2324, 1998.

\bibitem[Lim et~al.(2021)Lim, Ar{\i}k, Loeff, and Pfister]{lim2021temporal}
Bryan Lim, Sercan~{\"O} Ar{\i}k, Nicolas Loeff, and Tomas Pfister.
\newblock Temporal fusion transformers for interpretable multi-horizon time series forecasting.
\newblock \emph{International Journal of Forecasting}, 37\penalty0 (4):\penalty0 1748--1764, 2021.

\bibitem[Linting et~al.(2007)Linting, Meulman, Groenen, and van~der Koojj]{linting2007nonlinear}
Mari{\"e}lle Linting, Jacqueline~J Meulman, Patrick~JF Groenen, and Anita~J van~der Koojj.
\newblock Nonlinear principal components analysis: introduction and application.
\newblock \emph{Psychological methods}, 12\penalty0 (3):\penalty0 336, 2007.

\bibitem[Liu et~al.(2022)Liu, Pan, Lu, Wei, Wang, Du, Ding, and Zhou]{liu2022data}
Qing Liu, Kewei Pan, Ying Lu, Wei Wei, Shixing Wang, Wenjia Du, Zhao Ding, and Yang Zhou.
\newblock Data-driven for accelerated design strategy of photocatalytic degradation activity prediction of doped tio2 photocatalyst.
\newblock \emph{Journal of Water Process Engineering}, 49:\penalty0 103126, 2022.

\bibitem[Liu et~al.(2023)Liu, Zhang, Cheng, Deng, Yang, Zhang, and Zhou]{liu2023simple}
Zhaocong Liu, Fa~Zhang, Lin Cheng, Huanxi Deng, Xiaoyan Yang, Zhenyu Zhang, and Chi-Chun Zhou.
\newblock Simple but effective unsupervised classification for specified domain images: A case study on fungi images.
\newblock \emph{Available at SSRN 4673082}, 2023.

\bibitem[Liu et~al.(2024{\natexlab{a}})Liu, Ma, Wang, Matusik, and Tegmark]{liu2024kan}
Ziming Liu, Pingchuan Ma, Yixuan Wang, Wojciech Matusik, and Max Tegmark.
\newblock Kan 2.0: Kolmogorov-arnold networks meet science.
\newblock \emph{arXiv preprint arXiv:2408.10205}, 2024{\natexlab{a}}.

\bibitem[Liu et~al.(2024{\natexlab{b}})Liu, Wang, Vaidya, Ruehle, Halverson, Solja{\v c}i{\'c}, Hou, and Tegmark]{liu_kan_2024}
Ziming Liu, Yixuan Wang, Sachin Vaidya, Fabian Ruehle, James Halverson, Marin Solja{\v c}i{\'c}, Thomas~Y. Hou, and Max Tegmark.
\newblock {{KAN}}: {{Kolmogorov-Arnold Networks}}, June 2024{\natexlab{b}}.

\bibitem[Mallioris et~al.(2024)Mallioris, Diamantis, Bialas, and Bechtsis]{mallioris2024predictive}
P.~Mallioris, E.~Diamantis, C.~Bialas, and D.~Bechtsis.
\newblock Predictive maintenance framework for assessing health state of centrifugal pumps.
\newblock \emph{IAES International Journal of Artificial Intelligence}, 13\penalty0 (1):\penalty0 850--862, 2024.

\bibitem[Mamun et~al.(2022)Mamun, Farjana, Al~Mamun, and Ahammed]{mamun2022lung}
Muntasir Mamun, Afia Farjana, Miraz Al~Mamun, and Md~Salim Ahammed.
\newblock Lung cancer prediction model using ensemble learning techniques and a systematic review analysis.
\newblock In \emph{2022 IEEE World AI IoT Congress (AIIoT)}, pp.\  187--193. IEEE, 2022.

\bibitem[McInnes et~al.(2018)McInnes, Healy, and Melville]{mcinnes2018umap}
Leland McInnes, John Healy, and James Melville.
\newblock Umap: Uniform manifold approximation and projection for dimension reduction.
\newblock \emph{arXiv preprint arXiv:1802.03426}, 2018.

\bibitem[Mikolov(2013)]{mikolov2013efficient}
Tomas Mikolov.
\newblock Efficient estimation of word representations in vector space.
\newblock \emph{arXiv preprint arXiv:1301.3781}, 2013.

\bibitem[Mutlu et~al.(2023)Mutlu, Durmaz, Y{\"u}cel, Cengil, and Y{\i}ld{\i}r{\i}m]{mutlu2023prediction}
Hursit~Burak Mutlu, Fatih Durmaz, Nadide Y{\"u}cel, Emine Cengil, and Muhammed Y{\i}ld{\i}r{\i}m.
\newblock Prediction of maternal health risk with traditional machine learning methods.
\newblock \emph{Naturengs}, 4\penalty0 (1):\penalty0 16--23, 2023.

\bibitem[Nilsback \& Zisserman(2008)Nilsback and Zisserman]{nilsback2008automated}
Maria-Elena Nilsback and Andrew Zisserman.
\newblock Automated flower classification over a large number of classes.
\newblock In \emph{2008 Sixth Indian conference on computer vision, graphics \& image processing}, pp.\  722--729. IEEE, 2008.

\bibitem[Puliyanda(2024)]{puliyanda2024model}
Anjana Puliyanda.
\newblock Model-based catalyst screening and optimal experimental design for the oxidative coupling of methane.
\newblock \emph{Digital Chemical Engineering}, pp.\  100160, 2024.

\bibitem[Radford et~al.(2019)Radford, Wu, Child, Luan, Amodei, Sutskever, et~al.]{radford2019language}
Alec Radford, Jeffrey Wu, Rewon Child, David Luan, Dario Amodei, Ilya Sutskever, et~al.
\newblock Language models are unsupervised multitask learners.
\newblock \emph{OpenAI blog}, 1\penalty0 (8):\penalty0 9, 2019.

\bibitem[Rumelhart et~al.(1986)Rumelhart, Hinton, and Williams]{rumelhart1986learning}
David~E Rumelhart, Geoffrey~E Hinton, and Ronald~J Williams.
\newblock Learning representations by back-propagating errors.
\newblock \emph{nature}, 323\penalty0 (6088):\penalty0 533--536, 1986.

\bibitem[Schlichtkrull et~al.(2018)Schlichtkrull, Kipf, Bloem, Van Den~Berg, Titov, and Welling]{schlichtkrull2018modeling}
Michael Schlichtkrull, Thomas~N Kipf, Peter Bloem, Rianne Van Den~Berg, Ivan Titov, and Max Welling.
\newblock Modeling relational data with graph convolutional networks.
\newblock In \emph{The semantic web: 15th international conference, ESWC 2018, Heraklion, Crete, Greece, June 3--7, 2018, proceedings 15}, pp.\  593--607. Springer, 2018.

\bibitem[Schossler et~al.(2023)Schossler, Ojo, Jiang, Hu, and Yu]{schossler2023ensembled}
Rodrigo~Teixeira Schossler, Samuel Ojo, Zhuoying Jiang, Jiajie Hu, and Xiong Yu.
\newblock Ensembled machine learning models for tio2 photocatalytic degradation of air contaminants.
\newblock \emph{Available at SSRN 4435749}, 2023.

\bibitem[Seger(2018)]{seger2018investigation}
Cedric Seger.
\newblock An investigation of categorical variable encoding techniques in machine learning: binary versus one-hot and feature hashing, 2018.

\bibitem[Sun et~al.(2024)Sun, Gui, Chen, and Xie]{sun2024evaluation}
Yubo Sun, Weihua Gui, Xiaofang Chen, and Yongfang Xie.
\newblock Evaluation model of aluminum electrolysis cell condition based on multi-source heterogeneous data fusion.
\newblock \emph{International Journal of Machine Learning and Cybernetics}, 15\penalty0 (4):\penalty0 1375--1396, 2024.

\bibitem[Tenenbaum et~al.(2000)Tenenbaum, Silva, and Langford]{tenenbaum2000global}
Joshua~B Tenenbaum, Vin~de Silva, and John~C Langford.
\newblock A global geometric framework for nonlinear dimensionality reduction.
\newblock \emph{science}, 290\penalty0 (5500):\penalty0 2319--2323, 2000.

\bibitem[Tian(2023)]{tian2023fault}
Ying Tian.
\newblock Fault diagnosis strategy of industrial process based on multi-source heterogeneous information and deep learning.
\newblock \emph{Chemical Engineering Research and Design}, 198:\penalty0 459--477, 2023.

\bibitem[Vaswani(2017)]{vaswani2017attention}
A~Vaswani.
\newblock Attention is all you need.
\newblock \emph{Advances in Neural Information Processing Systems}, 2017.

\bibitem[Wang et~al.(2023)Wang, Li, and Han]{wang2023deep}
Xiaolu Wang, Aohan Li, and Guangjie Han.
\newblock A deep-learning-based fault diagnosis method of industrial bearings using multi-source information.
\newblock \emph{Applied Sciences}, 13\penalty0 (2):\penalty0 933, 2023.

\bibitem[Wang et~al.(2019)Wang, Sun, Liu, Sarma, Bronstein, and Solomon]{wang2019dynamic}
Yue Wang, Yongbin Sun, Ziwei Liu, Sanjay~E Sarma, Michael~M Bronstein, and Justin~M Solomon.
\newblock Dynamic graph cnn for learning on point clouds.
\newblock \emph{ACM Transactions on Graphics (tog)}, 38\penalty0 (5):\penalty0 1--12, 2019.

\bibitem[Woo et~al.(2023)Woo, Debnath, Hu, Chen, Liu, Kweon, and Xie]{woo2023convnext}
Sanghyun Woo, Shoubhik Debnath, Ronghang Hu, Xinlei Chen, Zhuang Liu, In~So Kweon, and Saining Xie.
\newblock Convnext v2: Co-designing and scaling convnets with masked autoencoders.
\newblock In \emph{Proceedings of the IEEE/CVF Conference on Computer Vision and Pattern Recognition}, pp.\  16133--16142, 2023.

\bibitem[Yun et~al.(2023)Yun, Zhang, Olshausen, LeCun, and Chen]{yun2023urlost}
Zeyu Yun, Juexiao Zhang, Bruno Olshausen, Yann LeCun, and Yubei Chen.
\newblock Urlost: Unsupervised representation learning without stationarity or topology.
\newblock \emph{arXiv preprint arXiv:2310.04496}, 2023.

\bibitem[Zhou et~al.(2022)Zhou, Gu, Fang, and Lin]{zhou2022automatic}
ChiChun Zhou, Yizhou Gu, Guanwen Fang, and Zesen Lin.
\newblock Automatic morphological classification of galaxies: convolutional autoencoder and bagging-based multiclustering model.
\newblock \emph{The Astronomical Journal}, 163\penalty0 (2):\penalty0 86, 2022.

\bibitem[Zhou et~al.(2024)Zhou, Guan, Yu, Shen, Zhang, and Gu]{zhou2024innovative}
Chichun Zhou, Xiaolin Guan, Zhuohang Yu, Yao Shen, Zhenyu Zhang, and Junjie Gu.
\newblock An innovative unsupervised gait recognition based tracking system for safeguarding large-scale nature reserves in complex terrain.
\newblock \emph{Expert Systems with Applications}, 244:\penalty0 122975, 2024.

\end{thebibliography}

\newpage
\appendix
\section{Generating Permutation Matrix M}
\label{appendix_Generate_M}
Constructing the designed permutation matrix in Sec.~\ref{permutation} is a key aspect of this paper. Although the method is relatively simple, we describe the process in detail in the appendix. The core idea is to create a reversible operation that rearranges the positions of $N$ elements ($1, 2, 3, \cdots, N$), ensuring that originally adjacent elements are no longer adjacent while non-adjacent elements become adjacent. This generates the reversible permutation matrix $M$.

The process works as follows: First, arrange the $N$ elements in order into an $R \times L$ matrix, where $N = R \times L$ with $R$ and $L$ being roughly equal in size. Transpose of this matrix and then reshape it into an $N \times 1$ vector. This new sequence represents the transformed positions of the original data. Next, create an all-zero matrix $M$ of size $N \times N$. Using the transformed positions, place $1$s in the corresponding row and column positions of $M$, resulting in the reversible permutation matrix.

For example, with $N = 9$, arrange the numbers 1 to 9 into a $3 \times 3$ matrix $[[1, 2, 3], [4, 5, 6], [7, 8, 9]]$. Transposing the matrix and reshaping it gives the new sequence $[1, 4, 7, 2, 5, 8, 3, 6, 9]$. Using this sequence, place $1$s in the appropriate positions in a $9 \times 9$ matrix $M$. The result is a permutation matrix where the distance between two adjacent elements in the new sequence is at least 3, meaning originally adjacent elements are no longer adjacent, and previously distant elements are now adjacent. Further details can be found in the code.

\section{Analysis of Pixel Distance vs. Pixel relations} \label{sec:pixeldistance}

To investigate the relationship between pixel distance and correlation in images, we randomly selected 200 images from 10 subfolders of the MNIST handwriting dataset. We calculated the Pearson correlation coefficient and mutual information for different pixel pairs based on their distance. Using the reference point $(5,5)$, we compared its relationship with all other pixels in each image. The results were sorted by distance, and for each unique distance, we retained only the highest correlation coefficient. We then plotted the variation of the Pearson correlation coefficient and mutual information with pixel distance at distances of $1$, $\sqrt{2}$, $2$, etc., as shown in Fig.~\ref{fig:Relationship}.
\label{Relationship}
\begin{figure}[!ht]
    \begin{center}
    \includegraphics[width=\textwidth]{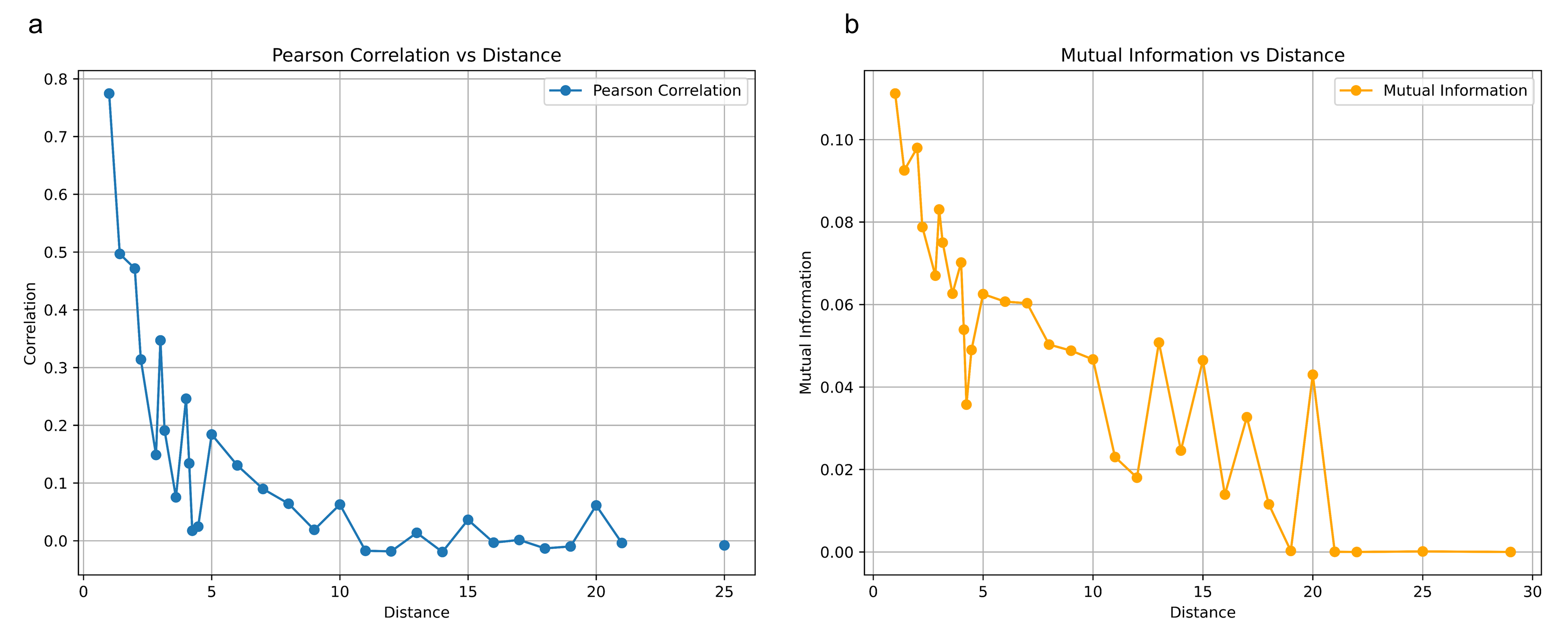}
    \end{center}
    \caption{Analysis of inter-pixel distance and statistical correlations. (a) Relationship between the Pearson correlation coefficient and inter-pixel distance. (b) Relationship between mutual information and inter-pixel distance.}
    \label{fig:Relationship}
\end{figure}

\section{The permuted CNN with designed permutation outperforms that with random permutation} \label{sec:randompermu}
To demonstrate that the designed permutation used in our permuted CNN (see Sec.~\ref{permutation}) outperforms random permutation, we conducted comparative experiments on synthesized data based on the MNIST dataset. 
The experimental results are presented in Tab.~\ref{result_random}. The results show that the designed permutation performs better, as it maximally separates nearby matrix elements while bringing distant ones closer, enabling the CNN to effectively sample both nearby and distant matrix elements.

\label{random permutation}
\begin{table}[!htb]
\caption{Experiment of permuted CNN with designed permutation versus that with random permutation.}
\label{result_random}
\begin{center}
\begin{tabular}{cccccc}
\hline
Method  & Parameters & Accuracies \\ \hline
EAPCR with designed permuted CNN & 37355  &\textbf{94.5}\%   \\ 
EAPCR with random permuted CNN  &  37355  &93.2\%   \\ \hline
\end{tabular}%
\end{center}
\end{table}

\section{Experiment details: synthesized dataset without explicit FRPs}
\label{result_sythesized_appendix}

An illustration of the EAPCR for the synthesized data of the handwritten digital MNIST dataset is given in Fig.~\ref{mnist_eapcr}. 
Tab.~\ref{parameters} shows the details of the structure and parameters of our EACPR and ablation model EACR.
Tab.~\ref{other_model_parameters} shows the details of the structures and parameters of other models used in the experiment of synthesized data.

\begin{figure}[!ht]
    \begin{center}
    \includegraphics[width=0.95\textwidth]{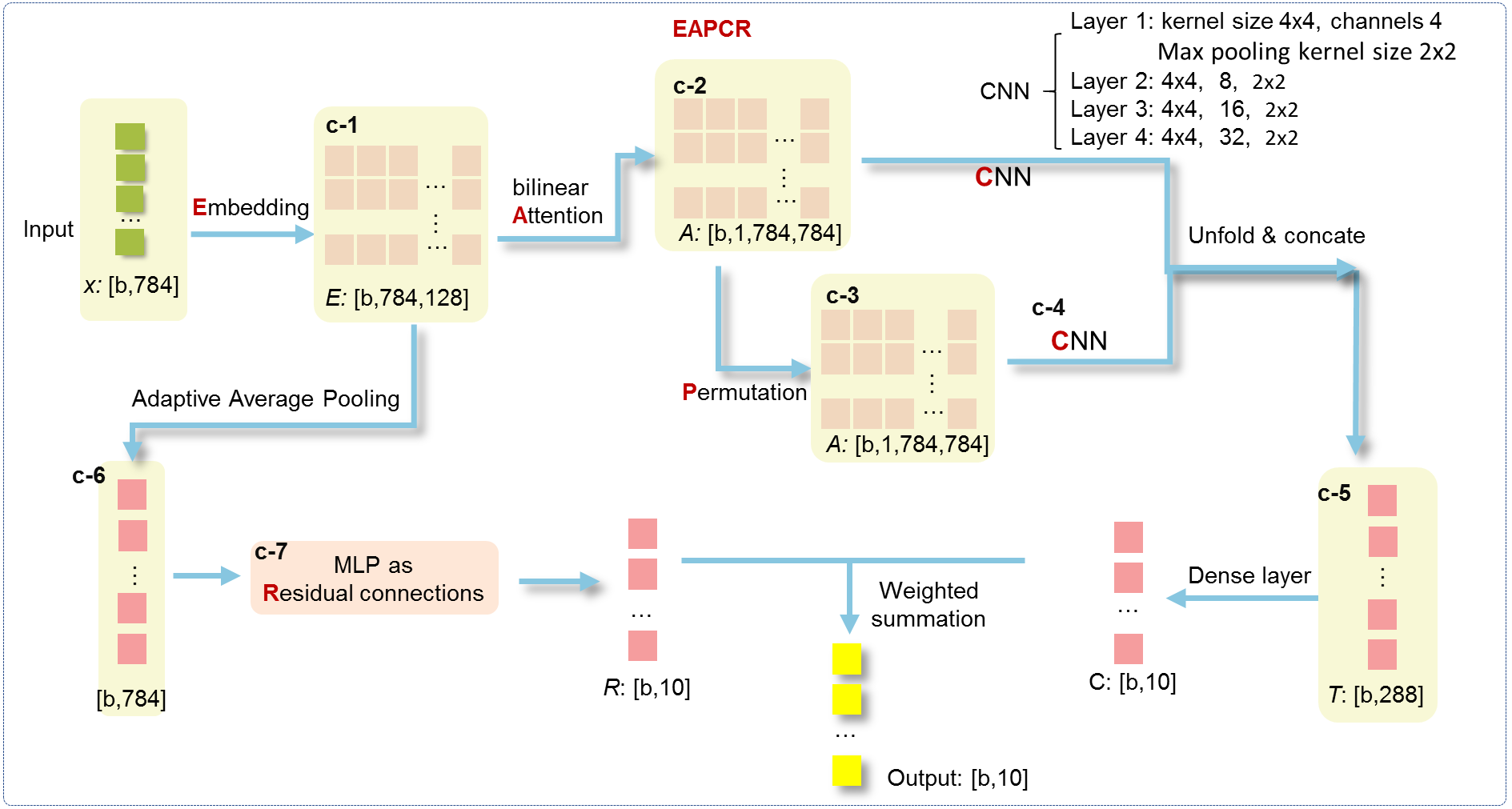}
    \end{center}
    \caption{An illustration of the EAPCR, showing the detailed structure and parameter setting for the synthesized data without FRPs.}
    \label{mnist_eapcr}
\end{figure}

\begin{table}[!htb]
\caption{The details of EAPCR and its ablation model EACR on synthesized data without FRPs.}
\label{parameters}
\begin{center}
\resizebox{\textwidth}{!}{%
\begin{tabular}{ccccccc}
\hline
Model & \multicolumn{3}{c}{Our ablation model EACR} & \multicolumn{3}{c}{Our model EAPCR}\\ \hline
Parameters & 29151 & 59361  & 79236 & 37355 & 67565 & 87440 \\
Train      & 30000 & 30000  & 30000 & 30000 & 30000 & 30000 \\
Test       & 5000  & 5000   & 5000  & 5000  & 5000  & 5000  \\
Batch size & 64    & 64     & 64    & 64    & 64    & 64    \\
Epoch      & 100   & 100    & 100   & 100   & 100   & 100   \\
Learning rate  & 0.003      & 0.003  & 0.003  & 0.003  & 0.003 & 0.003 \\
Dropout   & 0.5    & 0.5  & 0.5   & 0.5 & 0.5 & 0.5  \\
Embedding size  & 128  & 128 & 128  & 128  & 128  & 128   \\ \hline
\begin{tabular}[c]{@{}c@{}}\ac{cnn} Layer1\\ kernel size\\ Channels\\  Max pooling kernel size\end{tabular}       & \begin{tabular}[c]{@{}c@{}}Conv1(4x4, 4, 2x2 )\\ Conv2(4x4, 8, 2x2 )\\ Conv3(4x4, 16, 2x2 ) \\ Conv4(4x4, 16, 2x2 )\end{tabular} & \begin{tabular}[c]{@{}c@{}}Conv1(4x4, 4, 2x2 ) \\ Conv2(4x4, 8, 2x2 ) \\ Conv3(4x4, 16, 2x2 ) \\ Conv4(4x4, 16, 2x2 )\end{tabular} & \begin{tabular}[c]{@{}c@{}}Conv1(4x4, 4, 2x2 )\\ Conv2(4x4, 8, 2x2 )\\ Conv3(4x4, 16, 2x2 ) \\ Conv4(4x4, 16, 2x2 )\end{tabular} & \begin{tabular}[c]{@{}c@{}}Conv1(4x4, 4, 2x2 ) \\ Conv2(4x4, 8, 2x2 ) \\ Conv3(4x4, 16, 2x2 ) \\ Conv4(4x4, 16, 2x2 )\end{tabular} & \begin{tabular}[c]{@{}c@{}}Conv1(4x4, 4, 2x2 ) \\ Conv2(4x4, 8, 2x2 )\\ Conv3(4x4, 16, 2x2 )\\ Conv4(4x4, 16, 2x2 )\end{tabular} & \begin{tabular}[c]{@{}c@{}}Conv1(4x4, 4, 2x2 )\\ Conv2(4x4, 8, 2x2 ) \\ Conv3(4x4, 16, 2x2 )   \\ Conv4(4x4, 16, 2x2 )\end{tabular} \\ \hline
\begin{tabular}[c]{@{}c@{}}\ac{cnn} Layer2\\ kernel size\\ Channels\\ Max pooling kernel size\end{tabular} & / & /  & /  & \begin{tabular}[c]{@{}c@{}}Conv1(4x4, 4, 2x2 )\\ Conv2(4x4, 8, 2x2 ) \\ Conv3(4x4, 16, 2x2 ) \\ Conv4(4x4, 16, 2x2 )\end{tabular} & \begin{tabular}[c]{@{}c@{}}Conv1( 4x4, 4, 2x2 ) \\ Conv2(4x4, 8, 2x2 ) \\ Conv3(4x4, 16, 2x2 ) \\ Conv4(4x4, 16, 2x2 )\end{tabular} & \begin{tabular}[c]{@{}c@{}}Conv1(4x4, 4, 2x2 )\\ Conv2(4x4, 8, 2x2 )\\ Conv3(4x4, 16, 2x2 )\\ Conv4(4x4, 16, 2x2 )\end{tabular} \\ \hline
Residual  & 784to26,26to10  & 784to64,64to10 & 784to89,89to10 & 784to26,26to10  & 784to64,64to10 & 784to89,89to10 \\
\ac{mlp}   & 144to10  & 144to10  & 144to10   & 288to10  & 288to10  & 288to10 \\
Permutation matrix size   & /  & /  & /  & 784×784  & 784×784 & 784×784  \\ \hline
\end{tabular}%
}
\end{center}
\end{table}

\begin{table}[!htb]
\caption{The details of other models on synthesized data without FRPs.}
\label{other_model_parameters}
\begin{center}
\resizebox{\textwidth}{!}{%
\begin{tabular}{ccccccc}
\hline
Model  & \ac{mlp}   & \ac{cnn}  & \ac{gcn}   & Transformer  \\ \hline
Train & 30000    & 30000    & 30000 & 30000  \\
Test  & 5000    & 5000      & 5000  & 5000   \\
Batch size  & 16   & 16    & 128  & 64  \\
Epoch    & 100      & 100  & 1000    & 100   \\
Learning rate   & 0.001 & 0.001   & 0.003 & 0.0003  \\
Dropout   & 0.5   & 0.5     & 0.5   & 0.5   \\ \hline
Embedding & /     & /       & /     & Embedding (2,128) \\
Feature extraction &/& \begin{tabular}[c]{@{}c@{}}Conv1(5x5,8,2x2) \\Conv2(5x5,16,2x2)\end{tabular} & \begin{tabular}[c]{@{}c@{}}GCN1(1,64) \\GCN2(64,128)\end{tabular} & Transformer (128,4,1) \\
Residual &/ & / &\begin{tabular}[c]{@{}c@{}}Linear1(784,128)\\Linear2(128, 10)\end{tabular} & \begin{tabular}[c]{@{}c@{}}Linear1(784, 64)\\ Linear2(64, 10)\end{tabular}\\
Classification layer &\begin{tabular}[c]{@{}c@{}}Linear1(784 ,64)\\Linear2(64,10)\end{tabular} & \begin{tabular}[c]{@{}c@{}}Linear1(784, 64) \\Linear2(64, 10)\end{tabular} & Linear1(128,10) & \begin{tabular}[c]{@{}c@{}}Linear1(784,128)\\ Linear2(128,10)\end{tabular} \\ \hline
\end{tabular}
}
\end{center}
\end{table}

\emph{\ac{gcn}s} primarily extract features through the relationships between nodes within a graph structure. Unlike \ac{mlp}’s fully connected layers and \ac{cnn}’s convolutional layers, \ac{gcn} use adjacency matrices and node feature matrices for feature extraction, leveraging the graph's local structure to capture relationships between nodes. Here we test \ac{gcn} only on data without \ac{FRPs}. When \ac{gcn} with adjacency matrices (AM) given by the correlation matrix recovered from simple statistics on the data, its performance slightly improved, achieving an accuracy of 91.2\%, as shown in Fig.~\ref{fig:results}-c-4. However, the classification accuracy of \ac{gcn} with random AM drops to only 88.2\%.

\emph{\ac{rf}} is an algorithm in machine learning that can consider the information gain from combinations of features. It enhances prediction accuracy and stability by constructing multiple \ac{dt} and aggregating their predictions. Unlike \ac{mlp}, RF can naturally account for the combinations and interactions between features when processing various types of data, thereby demonstrating certain effectiveness in feature extraction. Particularly in tasks that require capturing complex data structures and relationships, RF can utilize the structure of its \ac{dt} to assess and exploit the information gain among features. In this experiment, as shown in Fig.~\ref{fig:results}-c-5, RF generally outperforms \ac{mlp}. However, the advantage of RF over \ac{mlp} is not obvious, showing only a slight improvement in performance.

\emph{Our ablation model, EACR}, which differs from EAPCR with no permuted \ac{cnn}, also achieved outstanding performance with fewer parameters than \ac{mlp}, highlighting its advanced feature extraction capabilities. For example, experimental results demonstrate that despite using fewer parameters (for example, 29,151 parameters achieving 93.3\% accuracy, 59,361 parameters achieving 94.3\% accuracy, and 79,236 parameters achieving 95.2\% accuracy, as shown in Fig.~\ref{fig:results}-c-6), EACR still exhibits exceptional performance in handling complex tasks, surpassing traditional models like \ac{mlp}.

\emph{\ac{kan}} is a neural network based on the Kolmogorov-Arnold representation theorem~\citep{liu2024kan}. Compared with \ac{mlp}, \ac{kan} does not only rely on fully connected layers to process data but builds a network formed by a combination of nested functions. This structure can more effectively capture and represent the complex features of the input data. However, our experimental results show that when the number of parameters is large, the effect of \ac{kan} does not reach the level of \ac{mlp}. As shown in Fig.~\ref{fig:results}-c-10, when the number of parameters is about 100,000, the accuracy of \ac{kan} is less than 90\%. However, when \ac{cnn} is combined with \ac{kan}, the feature extraction capability of \ac{cnn} can effectively extract local features and edge information from the image, and these features are used as input to \ac{kan}. The results show that this combination significantly improves the performance of the model. As shown in Fig.~\ref{fig:results}-c-8, when the number of parameters is 93,624, the accuracy is 97.5\%. However, when processing data without \ac{FRPs}, it is difficult for \ac{cnn} to capture effective features for \ac{kan} to use, resulting in a decline in overall model performance. As shown in Fig.~\ref{fig:results}-c-9, under the same parameters, the accuracy rate is only 89.7\%.

\emph{Transformers}~\citep{vaswani2017attention}, through their attention mechanisms, are capable of capturing global dependencies among all elements in the input data. Transformers calculate the mutual influences of all pairs of elements within the input sequence using self-attention layers, providing high flexibility and strong capability for information integration. However, this mechanism also leads to a significant increase in the number of model parameters. Despite the large number of parameters, Transformers did not significantly outperform \ac{mlp} in our experiments on new types of image processing tasks. Specifically, even though a Transformer was configured with 1,252,949 parameters, its accuracy was the same as that of an \ac{mlp} configured with 101,770 parameters, both achieving 94.4\% (Fig.~\ref{fig:results}-c-11). 

The specific parameters and accuracy of \ac{mlp}, \ac{cnn}, \ac{gcn}, \ac{rf}, \ac{dt}, Transformers,\ac{lasso}, \ac{Ela},our ablation method EACR and our method EAPCR results are shown in the Tab.~\ref{result_table}.
\label{parameter and behavior}

\begin{table}[!htb]
\caption{Experimental results of \ac{mlp}, \ac{cnn},\ac{gcn},\ac{rf},\ac{dt},
Transformers, our ablation method EACR, and our method EAPCR.}
\label{result_table}
\begin{center}
\resizebox{\textwidth}{!}{%
\begin{tabular}{cccccc}
\hline
\multicolumn{3}{c}{Raw Image (data with \ac{FRPs})} & \multicolumn{3}{c}{Synthesized data (data without \ac{FRPs})} \\ \hline
Method & Parameter quantity & Accuracy & Method & Parameter quantity & Accuracy \\ \hline
\ac{mlp}-1  & 20680  & 90.5\%  & \ac{mlp}-1 & 20680 & 90.6\% \\
\ac{mlp}-2  & 31810  & 91.4\%  & \ac{mlp}-2 & 31810 & 91.5\% \\
\ac{mlp}-3  & 41350  & 92.3\%  & \ac{mlp}-3 & 41350 & 92.0\% \\
\ac{mlp}-4  & 50890  & 92.7\%  & \ac{mlp}-4 & 50890 & 92.8\% \\
\ac{mlp}-5  & 63610  & 93.5\%  & \ac{mlp}-5 & 63610 & 93.5\% \\
\ac{mlp}-6  & 71560  & 93.7\%  & \ac{mlp}-6 & 71560 & 93.6\% \\
\ac{mlp}-7  & 83485  & 94.2\%  & \ac{mlp}-7 & 83485 & 93.9\%  \\
\ac{mlp}-8  & 91435  & 94.2\%  & \ac{mlp}-8 & 91435 & 94.3\%  \\
\ac{mlp}-9  & 101770 & 94.3\%  & \ac{mlp}-9 & 101770& 94.4\% \\ \hline
\ac{rf}-1   & 28729  & 92.0\% & \ac{rf}-1  & 29241  & 91.7\%  \\
\ac{rf}-2   & 57831  & 93.7\% & \ac{rf}-2  & 58390  & 93.6\%  \\ 
\ac{rf}-3   & 87038  & 94.2\% & \ac{rf}-3  & 87377  & 94.2\%  \\
\ac{rf}-4   & 116294 & 94.7\% & \ac{rf}-4  & 116168 & 94.2\%  \\
\ac{rf}-5   & 145773 & 94.7\% & \ac{rf}-5  & 145613 & 94.8\%  \\
\ac{rf}-6   & 291533 & 94.9\% & \ac{rf}-6  & 292517 & 94.9\%  \\
\ac{rf}-7   & 584091 & 95.2\% & \ac{rf}-7  & 583980 & 95.2\%  \\
\ac{rf}-8   & 876758 & 95.2\% & \ac{rf}-8  & 875610 & 95.1\%  \\
\ac{rf}-9   & 1169481& 95.3\% & \ac{rf}-9  & 1168306 & 95.1\%  \\
\ac{rf}-10  & 1461923& 95.4\% & \ac{rf}-10 & 1461973 & 95.1\%  \\\hline
\ac{dt}     & 4815   & 83.3\% & \ac{dt}    & 4815    & 83.1\% \\
\ac{lasso}  & 7850   & 88.7\% & \ac{lasso} & 7850    & 88.8\% \\
\ac{Ela}    & 7850   & 88.9\% & \ac{Ela}   & 7850    & 88.9\% \\
\ac{kan}         & 91390  & 87.4\% &  \ac{kan}       & 91390   & 88.0\%   \\
\ac{kan}         & 118386 & 86.6\% &  \ac{kan}       & 118386  & 87.5\%   \\
\ac{cnn}+\ac{kan}     & 93624  & 97.5\% & \ac{cnn}+\ac{kan}   & 93624   & 89.7\%   \\
\ac{cnn}+\ac{mlp}  & 21928  & 97.7\% &  \ac{cnn}+\ac{mlp}   & 21928 & 90.8\%  \\
/  & / & /  & \ac{gcn} with Random AM & 22451   & 88.2\% \\
/  & / & /  & \ac{gcn} with AM given by correlation matrix & 22451   & 91.2\%  \\
/  & / & /  & Transformer& 1252949 & 94.4\%  \\\hline
/  & / & /  & \textbf{Ablation EACR-1} & \textbf{29151}  & \textbf{93.3\%} \\
/  & / & /  & \textbf{Ablation EACR-2} & \textbf{59361}  & \textbf{94.3\%} \\
/  & / & /  & \textbf{Ablation EACR-3} & \textbf{79236}  & \textbf{95.2\%} \\
/  & / & /  & \textbf{Our EAPCR-1} & \textbf{37355}  & \textbf{94.5\%} \\
/  & / & /  & \textbf{Our EAPCR-2} & \textbf{67565}  & \textbf{95.0\%} \\
/  & / & /  & \textbf{Our EAPCR-3} & \textbf{87440}  & \textbf{95.5\%} \\ \hline
\end{tabular}%
}
\end{center}
\end{table}

\section{Experiment details: scientific tasks}
Tab.~\ref{data1} provides details of the heart disease dataset from the UCI Machine Learning Repository, consisting of 13 feature attributes and 1 target attribute. The attributes are as follows: 
age (age in years), 
sex (1 = male, 0 = female), 
cp (chest pain type, where 1 = typical angina, 2 = atypical angina, 3 = non-anginal pain, 4 = asymptomatic), 
trestbps (resting blood pressure in mm Hg on admission to the hospital), 
chol (serum cholesterol in mg/dl), 
fbs (fasting blood sugar $ > 120$ mg/dl, 1 = true, 0 = false), 
restecg (resting electrocardiographic results, where 0 = normal, 1 = having ST-T wave abnormality such as T wave inversions or ST elevation/depression $ > 0.05$ mV, 2 = showing probable or definite left ventricular hypertrophy by Estes' criteria), 
thalach (maximum heart rate achieved), 
exang (exercise-induced angina, 1 = yes, 0 = no), 
oldpeak (ST depression induced by exercise relative to rest), 
slope (slope of the peak exercise ST segment, where 1 = upsloping, 2 = flat, 3 = downsloping), 
ca (number of major vessels colored by fluoroscopy, ranging from 0 to 3), 
thal (where 3 = normal, 6 = fixed defect, 7 = reversible defect). 
The target attribute indicates the presence of heart disease, where 0 represents no disease and 1 represents disease.

Tab.~\ref{data2} provides details of the inorganic catalysis dataset. The value ranges for the characteristic variables are as follows: Dopant (Ag, Bi, C, Ce, Cd, F, Fe, Ga, I, Mo, N, Ni, S), Dopant/Ti mole ratio (0–93:5), Calcination temperature (400–900°C), Pollutant (Methylene blue [MB], phenol, rhodamine B [RhB], methyl orange [MO], methyl red [MR], acid orange [AO]), Catalyst/pollutant mass ratio (5:1–1000:1), pH (2–13), Experimental temperature (16–32°C), Light wavelength (254–600 nm), and Illumination time (5–480 minutes).

Tab.~\ref{data3} provides details of the sensor dataset. The dataset includes the following variables: footfall (the number of people or objects passing by the machine), tempMode (the temperature mode or setting of the machine), AQ (air quality index near the machine), USS (ultrasonic sensor data, indicating proximity measurements), CS (current sensor readings, indicating the electrical current usage of the machine), VOC (volatile organic compounds level detected near the machine), RP (rotational position or RPM of the machine parts), IP (input pressure to the machine), Temperature (the operating temperature of the machine), and fail (binary indicator of machine failure, where 1 indicates failure and 0 indicates no failure).

\label{Scientific tasks}
\begin{table}[!htb]
\caption{UCI heart disease dataset}
\label{data1}
\begin{center}
\resizebox{\textwidth}{!}{%
\begin{tabular}{cccccccccccccc}
\hline
age&sex&cp&trestbps&chol&fbs&restecg&thalach&exang&oldpeak&slope&ca&thal& target \\ \hline
63 & 1 & 1 & 145 & 233& 1 & 2 & 150 & 0 & 2.3 & 3  & 0  & 6 & 0 \\
67 & 1 & 4 & 160 & 286& 0 & 2 & 108 & 1 & 1.5 & 2  & 3  & 3 & 2  \\
67 & 1 & 4 & 120 & 229& 0 & 2 & 129 & 1 & 2.6 & 2  & 2  & 7 & 1  \\
…  & … & … & …   & …  & … & … & …   & … & …   & …  & …  & … & …  \\
…  & … & … & …   & …  & … & … & …   & … & …   & …  & …  & … & …  \\
…  & … & … & …   & …  & … & … & …   & … & …   & …  & …  & … & …   \\
37 & 1 & 3 & 130 & 250 & 0 & 0  & 187  & 0  & 3.5  & 3  & 0 & 3 & 0  \\
41 & 0 & 2 & 130 & 204 & 0 & 2  & 172  & 0  & 1.4  & 1  & 0  & 3 & 0   \\
56 & 1 & 2 & 120 & 236 & 0 & 0  & 178  & 0  & 0.8  & 1  & 0  & 3 & 0      \\
62 & 0 & 4 & 140 & 268 & 0 & 2  & 160  & 0  & 3.6  & 3  & 2  & 3 & 3   \\ \hline
\end{tabular}
}
\end{center}
\end{table}

\begin{table}[!htb]
\caption{Inorganic catalysis dataset}
\label{data2}
\begin{center}
\resizebox{\textwidth}{!}{%
\begin{tabular}{cccccccccc}
\hline
Dopant & \multicolumn{1}{l}{\begin{tabular}[c]{@{}l@{}}Dopant/Ti \\ mole ratio\end{tabular}} & \multicolumn{1}{l}{\begin{tabular}[c]{@{}l@{}}Calcination \\ temperature\end{tabular}} & \multicolumn{1}{l}{Pollutant} & \begin{tabular}[c]{@{}c@{}}Catalyst/Pollutant \\ mass ratio\end{tabular} & pH & \multicolumn{1}{l}{\begin{tabular}[c]{@{}l@{}}Experimental \\ temperature\end{tabular}} & \multicolumn{1}{l}{\begin{tabular}[c]{@{}c@{}}Light \\ wavelength\end{tabular}} & \begin{tabular}[c]{@{}c@{}}Illumination \\ time\end{tabular} & \multicolumn{1}{l}{\begin{tabular}[c]{@{}c@{}}Degradation \\ rate\end{tabular}}\\ \hline
C & 16 & 400 & MB & 100 & 7 & 25 & 425 & 20 & 30\\
C & 16 & 400 & MB & 100 & 7 & 25 & 425 & 40 & 43\\
C & 16 & 400 & MB & 100 & 7 & 25 & 425 & 60 & 48\\
… & …  & …   & …  & … & … & … & … & …  & … \\
… & …  & …   & …  & … & … & … & … & …  & … \\
… & …  & …   & …  & … & … & … & … & …  & …\\
Fe & 0  & 500 & MB & 100  & 7  & 0 & 545 & 60   & 28.1\\
Fe & 0  & 500 & MB & 100  & 7  & 0 & 545 & 120  & 44\\
Fe & 0  & 500 & MB & 100  & 7  & 0 & 545 & 180  & 52.8\\
Fe & 0  & 500 & MB & 100  & 7  & 0 & 545 & 300  & 69\\ \hline
\end{tabular}
}
\end{center}
\end{table}

\begin{table}[!htb]
\caption{Sensor measurements dataset}
\label{data3}
\begin{center}
\begin{tabular}{cccccccccccccc}
\hline
footfall & tempMode & AQ  & USS & CS  & VOC & RP  & IP & Temperature & fail \\ \hline
0   & 7   & 7  & 1  & 6   & 6   & 36  & 3   & 1   & 1    \\
190 & 1   & 3  & 3  & 5   & 1   & 20  & 4   & 1   & 0    \\
5   & 5   & 3  & 3  & 6   & 1   & 24  & 6   & 1   & 0    \\
... & ... & ... & ... & ... & ... & ... & ... & ... & ...  \\
... & ... & ... & ... & ... & ... & ... & ... & ... & ...  \\
... & ... & ... & ... & ... & ... & ... & ... & ... & ...  \\
74  & 7   & 4   & 4   & 7   & 2   & 88  & 2   & 2   & 0    \\
190 & 0   & 2   & 4   & 6   & 2   & 20  & 4   & 2   & 0    \\
12  & 3   & 4   & 6   & 3   & 2   & 27  & 3   & 2   & 0    \\
0   & 7   & 6   & 1   & 6   & 6   & 44  & 4   & 2   & 1    \\ \hline
\end{tabular}
\end{center}
\end{table}

We also applied our EAPCR method to additional non-image medical data, inorganic catalysis data, and system anomaly detection data, with the specific results shown below.

\paragraph{More non-image medical diagnosis:} Data1: Lung cancer dataset~\citep{mamun2022lung}, which contains 309 samples, 15 feature attributes, and 1 classification attribute. The feature attributes include: Gender (M = male, F = female), Smoking (YES = 2, NO = 1), Yellow fingers (YES = 2, NO = 1), Anxiety (YES = 2, NO = 1), Peer pressure (YES = 2, NO = 1), Chronic disease (YES = 2, NO = 1), Fatigue (YES = 2, NO = 1), Allergy (YES = 2, NO = 1), Wheezing (YES = 2, NO = 1), Alcohol consumption (YES = 2, NO = 1), Coughing (YES = 2, NO = 1), Shortness of breath (YES = 2, NO = 1), Swallowing difficulty (YES = 2, NO = 1), and Chest pain (YES = 2, NO = 1). The classification attribute is Lung Cancer (YES, NO). The specific results are shown in Tab.~\ref{app.result_1}.

Data2: Breast cancer dataset (https://www.kaggle.com/datasets/abdelrahman16/breast-cancer-prediction), which contains 213 samples and 9 feature attributes. 
The attributes include: Year (the year when the data was recorded), Age (age of the patient), Menopause (menopausal status of the patient, 1 for postmenopausal, 0 for premenopausal), Tumor Size (size of the tumor in centimeters), Inv-Nodes (presence of invasive lymph nodes), Breast (breast affected: Left or Right), Metastasis (presence of metastasis, 0 for no, 1 for yes), Breast Quadrant (quadrant of the breast where the tumor is located, e.g., Upper inner, Upper outer), History (patient's history of breast cancer, 0 for no, 1 for yes), and Diagnosis Result (Benign or Malignant).
The specific experimental results are shown in Tab.~\ref{app.result_2}.

\begin{table}[!htb]
\caption{Comparison of our method with others in the diagnosis of Lung cancer dataset}
\label{app.result_1}
\begin{center}
\begin{tabular}{cccccc}
\toprule
Method  & Accuracy    & Precision  & Recall     & F1 Score  & AUC \\
\midrule
Bagging  & 89.76\% & 91.88\%       & 89.35\%       & 90.00\%   & 95.30\% \\
AdaBoost  & 90.70\% & 90.70\%       & 90.70\%       & 90.70\%  & 97.62\%   \\
LightGBM & 92.56\%  & 93.93\%     & 	92.10\%      & 92.71\%    & 92.71\% \\
XGBoost & 94.42\%     & 95.66\%    & 94.46\% & 94.74\%  & 98.14\%   \\
\textbf{EAPCR}  & \textbf{ 96.30\%} & \textbf{ 96.49\%} & \textbf{ 96.49\%} & \textbf{ 96.49\%}& \textbf{ 98.61\%} \\
\bottomrule
\end{tabular}
\end{center}
\end{table}

\begin{table}[!htb]
\caption{Comparison of our method with others in the diagnosis of Breast cancer dataset}
\label{app.result_2}
\begin{center}
\begin{tabular}{cccccc}
\toprule
Method  & Accuracy    & Precision  & Recall     & F1 Score \\
\midrule
XGBoost  & 83.72\% & 81.25\%       & 76.47\%       & 78.79\%   \\
\ac{dt}  & 83.72\% & 81.25\%       & 76.47\%       & 78.79\%   \\
\ac{knn} & 86.05\%  & 82.35\%     & 82.35\%      & 82.35\%     \\
Logistic Regression & 88.37\%     & 87.50\%    & 82.35\% & 84.85\%  \\
Extra Trees Classifier & 88.37\%     & 87.50\%    & 82.35\% & 84.85\%  \\
\textbf{EAPCR}  & \textbf{ 93.02\%} & \textbf{ 100\%} & \textbf{ 82.35\%} & \textbf{ 90.32\%}\\
\bottomrule
\end{tabular}
\end{center}
\end{table}

\paragraph{More inorganic catalysis efficiency prediction:} Data 1: Thermocatalytic dataset~\citep{schossler2023ensembled}, which includes three metal elements (M1, M2, M3), support ID, M1M2M3 ratio, temperature, volume flow rate, methane flow rate, time, and methane/O2 ratio. The specific experimental results are shown in Tab.~\ref{app.result_3}.

Data 2: Each sample in Dataset 2~\citep{puliyanda2024model} contains 8 experimental variables, including organic pollutants (OC), UV light intensity (I, mW/cm\(^2\)), wavelength (W, nm), dosage (D, mg/cm\(^2\)), humidity (H, \%), experimental temperature (T, \(^\circ\)C), reactor volume (R, L), and initial concentration of pollutants (InitialC, ppmv). The light intensity ranges from 0.36 to 75 mW/cm\(^2\), the illumination wavelength ranges from 253.7 to 370 nm, the titanium dioxide dosage ranges from 0.012 to 5.427 mg/cm\(^2\), the humidity ranges from 0 to 1600\%, the experimental temperature ranges from 22 to 350\(^\circ\)C, the reactor volume ranges from 0.04 to 216 L, and the initial concentration of pollutants ranges from 0.001 to 5944 ppmv. The photodegradation rate (k, min\(^{-1}\)/cm\(^2\)) is set as the response variable. The specific experimental results are shown in Tab.~\ref{app.result_4}.

\begin{table}[!htb]
\caption{Comparison of our method with others in the Inorganic catalysis data set.}
\label{app.result_3}
\begin{center}
\resizebox{\textwidth}{!}{%
\begin{tabular}{ccccc}
\toprule
Model   & MAE       & MSE    & RMSE     & $R^2$        \\ 
\midrule
ANN with BO~\citep{puliyanda2024model}         & /      & 0.559    & /        & 0.438      \\
Catboost + adaboost~\citep{puliyanda2024model} & /      & 0.064    & /        & 0.922           \\
GBM with BO~\citep{puliyanda2024model}         & /      & 0.117    & /        & 0.882           \\
XGB with HYPEROPT~\citep{puliyanda2024model}   & /      & 0.073    & /        & 0.927           \\
\textbf{EAPCR}      & \textbf{0.131 ± 0.004} & \textbf{0.054 ± 0.001} & \textbf{0.233 ± 0.003} & \textbf{0.940 ± 0.002} \\ 
\bottomrule
\end{tabular}
}
\end{center}
\end{table}

\begin{table}[!htb]
\caption{Comparison of our method with others in the Inorganic catalysis data set.}
\label{app.result_4}
\begin{center}
\begin{tabular}{ccccc}
\toprule
Model   & MAE       & MSE    & RMSE     & $R^2$      \\ 
\midrule
\ac{rf} & /         & /      & 3.40     & 0.89      \\
\textbf{EAPCR}   & \textbf{1.27 ± 0.02} & \textbf{2.88 ± 0.09} & \textbf{1.69 ± 0.02} & \textbf{0.97 ± 0.00} \\ 
\bottomrule
\end{tabular}
\end{center}
\end{table}

\paragraph{More system anomaly detection:}
Centrifugal pumps dataset~\citep{mallioris2024predictive}. The dataset contains 5,118 rows of measurements from two centrifugal pumps from the same manufacturer. These measurements include key features: value\_ISO, value\_DEM, value\_ACC, value\_P2P, value\_TEMP, minute, second, year, month, day, hour, and Machine\_ID (1 for healthy, 2 for maintenance status). The specific results are shown in Tab.~\ref{app.result_5}.

\begin{table}[!htb]
\caption{Comparison of our method with others in the diagnosis of Centrifugal pumps dataset.}
\label{app.result_5}
\begin{center}
\begin{tabular}{ccccc}
\toprule
Method  & Accuracy    & Precision  & Recall     & F1 Score \\
\midrule
\ac{svm}  & 96.51\% & 97.77\%       & 95.50\%    & 96.66\%  \\
\ac{NB}   & 96.51\% & 97.77\%       & 95.50\%    & 96.66\%  \\
\ac{rf}   & 98.25\% & 96.73\%       & 100\%      & 98.33\%    \\
XGBoost   & 98.83\% & 97.82\%       & 100\%      & 98.89\%    \\
\textbf{EAPCR}  & \textbf{ 100\%} & \textbf{ 100\%} & \textbf{ 100\%} & \textbf{ 100\%}\\
\bottomrule
\end{tabular}
\end{center}
\end{table}

\section{Why \ac{FRPs} matters?}
\label{complex analysis}
In this study, we show that \ac{FRPs} are an important type of prior knowledge for the application of deep learning methods in various tasks. Another well-recognized prior knowledge is the annotated dataset. Despite significant research on overcoming the scarcity of annotated data in various scientific applications~\citep{zhou2022automatic,gao2023new,liu2023simple}, fewer efforts have focused on designing feature extractors without prior knowledge of \ac{FRPs}.

The key to designing an effective feature extractor is to effectively sample the feature combinations consisting of strongly interactive features where the combined effect exceeds the sum of their individual contributions ~\citep{koh2017understanding,ali2012random,beraha2019feature,deng_discovering_2022-1}. In this section, we demonstrate that these interactive feature combinations are selected from the combinations of correlated features. Typically, FRPs contain the underlying correlation information between features. As a result, combinations of related features often lead to combinations of correlated features, highlighting the importance of FRPs.

Here, we prove that these interactive features are correlated.

\begin{proposition}
    If features $A$ and $B$ are independent, then:
    \[IG(Y,A) + IG(Y,B) = IG(Y,A,B)\]
    where:
    \[IG(Y,A) \triangleq H(Y) - H(Y|A)\]
    \[IG(Y,B) \triangleq H(Y) - H(Y|B)\]
    \[IG(Y,A,B) \triangleq H(Y) - H(Y|A,B)\] with $H(Y)$ the entropy of $Y$,  $IG(Y,A)$ the information gain of $Y$ given $A$, $H(Y|A)$ the conditional entropy of $Y$ given $A$, and  $IG(Y,A,B)$ the  information gain of $Y$ given $A$ and $B$ 
\end{proposition}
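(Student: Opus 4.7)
The plan is to translate the identity into mutual information and then apply the chain rule. First I would observe that, by definition, $IG(Y, X) = H(Y) - H(Y \mid X) = I(Y; X)$, so the claim is equivalent to $I(Y;A) + I(Y;B) = I(Y;A,B)$. The chain rule for mutual information gives
\begin{equation*}
I(Y; A, B) \;=\; I(Y; A) + I(Y; B \mid A),
\end{equation*}
so the whole task collapses to showing that the independence hypothesis forces $I(Y; B \mid A) = I(Y; B)$.

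To close that step I would work at the level of joint distributions. Under the natural reading that $A$ and $B$ are independent not only marginally but also given the label (so $P(a,b) = P(a)P(b)$ together with $P(a,b \mid y) = P(a \mid y)P(b \mid y)$), Bayes' rule yields
\begin{equation*}
P(y \mid a, b) \;=\; \frac{P(a,b \mid y)\,P(y)}{P(a,b)} \;=\; \frac{P(a \mid y)\,P(b \mid y)\,P(y)}{P(a)\,P(b)} \;=\; \frac{P(y \mid a)\,P(y \mid b)}{P(y)}.
\end{equation*}
Taking $-\log$ of both sides and averaging against $P(y,a,b)$ collapses to
\begin{equation*}
H(Y \mid A, B) \;=\; H(Y \mid A) + H(Y \mid B) - H(Y),
\end{equation*}
and rearranging this immediately recovers $IG(Y,A,B) = IG(Y,A) + IG(Y,B)$.

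The main obstacle, and the step I would flag explicitly, is interpreting the hypothesis. Pure marginal independence of $A$ and $B$ is \emph{not} enough: the gap $I(Y;B \mid A) - I(Y;B)$ equals the interaction information $I(A;B \mid Y) - I(A;B)$, which vanishes only when conditional independence given $Y$ also holds. I would therefore either strengthen the stated hypothesis to the joint factorization $P(y,a,b) = P(y)\,P(a \mid y)\,P(b \mid y)$, or restate the conclusion as the inequality $IG(Y,A) + IG(Y,B) \le IG(Y,A,B)$ with equality under that stronger condition. Once this interpretive point is pinned down, the remainder is routine entropy bookkeeping and the argument above goes through verbatim.
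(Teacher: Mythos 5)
Your analysis is correct, and the reservation you flag is exactly the right one: with ``independent'' read as marginal independence of $A$ and $B$, the proposition is false in general. The XOR example makes this concrete --- take $A,B$ independent fair bits and $Y=A\oplus B$; then $IG(Y,A)=IG(Y,B)=0$ while $IG(Y,A,B)=1$ bit. The paper's own proof follows the same algebraic route as your second display, but it simply \emph{asserts} the key identity $H(Y|A,B)=H(Y|A)+H(Y|B)-H(Y)$ as a consequence of independence (``their information contribution to $Y$ is completely independent''), which is precisely the step that fails without also conditioning on $Y$; everything after that line is the same routine bookkeeping you describe. Your route via the chain rule $I(Y;A,B)=I(Y;A)+I(Y;B\mid A)$ and the interaction-information identity $I(Y;B\mid A)-I(Y;B)=I(A;B\mid Y)-I(A;B)$ is the cleaner one, because it isolates the gap exactly: under marginal independence alone you get $IG(Y,A,B)=IG(Y,A)+IG(Y,B)+I(A;B\mid Y)\ge IG(Y,A)+IG(Y,B)$, with equality iff $A$ and $B$ are also conditionally independent given $Y$. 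One small caution about your proposed repair: the factorization $P(y,a,b)=P(y)P(a\mid y)P(b\mid y)$ by itself encodes only conditional independence given $Y$ and, taken alone, pushes the inequality the other way (a deficit of $I(A;B)$); you need to keep the marginal independence from the statement alongside it, which is what your Bayes computation actually uses. With both assumptions in hand, your argument goes through and is more careful than the proof in the paper.
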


\begin{proof}
    Since $A$ and $B$ are independent, their information contribution to $Y$ is completely independent; therefore, their joint effect equals the simple sum of their individual effects minus the entropy of $Y$, i.e.:
    \[H(Y|A,B) = H(Y|A) + H(Y|B) - H(Y)\]
    Substituting the definition of information gain into the assumption of independent conditional entropy:
    \[IG(Y,A,B) = H(Y) - H(Y|A,B) = H(Y) - (H(Y|A) + H(Y|B) - H(Y))\]
    \[IG(Y,A,B) = 2H(Y) - H(Y|A) - H(Y|B)\]
    The sum of information gains:
    \[IG(Y,A) + IG(Y,B) = (H(Y) - H(Y|A)) + (H(Y) - H(Y|B)) = 2H(Y) - H(Y|A) - H(Y|B)\]
    Thus, it is proved that \( IG(Y,A) + IG(Y,B) = IG(Y,A,B) \).
\end{proof}

\begin{proposition}
\label{pro_corr}
 If:
\begin{equation}\label{eq:app_ass}
IG(Y,A,B) > IG(Y,A) + IG(Y,B),
\end{equation}
then, features $A$ and $B$ are correlated. 
\end{proposition}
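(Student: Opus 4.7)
The plan is to prove the statement by contraposition, directly invoking the previous proposition. Specifically, the previous proposition establishes that independence of $A$ and $B$ forces the equality $IG(Y,A,B) = IG(Y,A) + IG(Y,B)$. Thus, its contrapositive immediately gives: whenever $IG(Y,A,B) \neq IG(Y,A) + IG(Y,B)$, the features $A$ and $B$ cannot be independent. Since the hypothesis \eqref{eq:app_ass} is a strict inequality, it implies in particular $IG(Y,A,B) \neq IG(Y,A) + IG(Y,B)$, so the conclusion follows at once.

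Concretely, the steps I would carry out are: (i) assume for contradiction that $A$ and $B$ are independent; (ii) apply the previous proposition to obtain $IG(Y,A,B) = IG(Y,A) + IG(Y,B)$; (iii) note that this directly contradicts the strict inequality assumed in \eqref{eq:app_ass}; (iv) conclude that the assumption of independence is untenable, so $A$ and $B$ must be correlated in the sense used in this paper (i.e., not independent).

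The only real point of care is terminological rather than mathematical: strictly speaking, ``uncorrelated'' in the Pearson sense is weaker than ``independent,'' so the proposition is really proving ``$A$ and $B$ are not independent.'' Since the paper treats \emph{correlated} as a synonym for \emph{not independent} (which is the notion relevant to feature interactions captured by information gain), this identification is what underlies the claim. I would make this identification explicit in one sentence so the reader does not mistake the conclusion for a statement about linear correlation. I do not anticipate any genuine obstacle, since the argument is essentially a one-line contraposition of the immediately preceding result, with no additional information-theoretic machinery required.
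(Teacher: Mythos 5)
Your proposal is correct and is essentially the same argument as the paper's: the paper expands the information gains into conditional entropies to obtain $H(Y|A)+H(Y|B)-H(Y) > H(Y|A,B)$ and then verbally concludes that this is incompatible with independence, which is exactly the contrapositive of the preceding proposition that you invoke directly. Your version is in fact the cleaner formalization of the paper's informal closing step, and your terminological caveat (that ``correlated'' here means ``not independent'') is a fair and worthwhile clarification.
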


\begin{proof}
    Rearranging \eqref{eq:app_ass} gives us
    \[IG(Y,A,B) > IG(Y,A) + IG(Y,B)\]
    Following the definition of information gain, we have:
    \[H(Y) - H(Y|A,B) > (H(Y) - H(Y|A)) + (H(Y) - H(Y|B))\]
    \[H(Y) - H(Y|A,B) > 2H(Y) - H(Y|A) - H(Y|B)\]
    \[H(Y|A) + H(Y|B) - H(Y) > H(Y|A,B)\]
    The above inequalities imply that $A$ and $B$ jointly provide more information than the sum of the information provided individually. 
    This is typically because there is some interaction or dependency between $A$ and $B$ that causes their combined information gain to exceed the individual gains, hence $A$ and $B$ are not independent.
\end{proof}

\begin{proposition}
    If feature $A$ and $B$ have an interaction, i.e.:
    \begin{equation}\label{eq:app_ass_2}
    H(Y|A,B) < H(Y|A) + H(Y|B) - H(Y),
    \end{equation}
    then:
    \[IG(Y,A,B) > IG(Y,A) + IG(Y,B)\]
\end{proposition}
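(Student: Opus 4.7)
The plan is to invert the chain of algebraic manipulations already carried out in the proof of Proposition~\ref{pro_corr}. The hypothesis $H(Y|A,B) < H(Y|A) + H(Y|B) - H(Y)$ is exactly the conditional-entropy formulation of ``$A$ and $B$ interact,'' while the conclusion $IG(Y,A,B) > IG(Y,A) + IG(Y,B)$ is the information-gain formulation of the same fact. Since the two quantities are linked by the single identity $IG(Y,\cdot) = H(Y) - H(Y|\cdot)$, the argument reduces to a short rewrite with strict inequalities preserved at every step.

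Concretely, I would start from the assumed inequality
$$H(Y|A,B) < H(Y|A) + H(Y|B) - H(Y),$$
multiply by $-1$ to flip the direction, and then add $H(Y)$ to both sides to obtain
$$H(Y) - H(Y|A,B) > 2H(Y) - H(Y|A) - H(Y|B).$$
The left-hand side is $IG(Y,A,B)$ by definition, and the right-hand side regroups as $\bigl(H(Y) - H(Y|A)\bigr) + \bigl(H(Y) - H(Y|B)\bigr) = IG(Y,A) + IG(Y,B)$. Reading the inequality in this rearranged form yields the desired $IG(Y,A,B) > IG(Y,A) + IG(Y,B)$.

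There is essentially no obstacle: no positivity, independence, or measure-theoretic hypothesis is needed beyond the standard definitions of entropy and conditional entropy, and each step is an equality or an order-preserving operation on a strict inequality. The proposition is really the converse of Proposition~\ref{pro_corr}; taken together the two statements show that the interaction condition at the level of conditional entropies and strict superadditivity of information gain are logically equivalent, which is the substantive point the authors wish to record for later use.
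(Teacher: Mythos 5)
Your proof is correct and is essentially the paper's own argument: both reduce to the single identity $IG(Y,\cdot)=H(Y)-H(Y|\cdot)$ and an order-preserving rearrangement of the assumed strict inequality. No gap to report.
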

\begin{proof}
    Rewriting information gain:
    \[IG(Y,A) + IG(Y,B) = (H(Y) - H(Y|A)) + (H(Y) - H(Y|B)) = 2H(Y) - H(Y|A) - H(Y|B)\]
    Substituting \eqref{eq:app_ass_2} into the calculation of information gain gives:
    \[IG(Y,A,B) = H(Y) - H(Y|A,B)\]
    \[IG(Y,A,B) > H(Y) - (H(Y|A) + H(Y|B) - H(Y))\]
    \[IG(Y,A,B) > 2H(Y) - H(Y|A) - H(Y|B)\]
    This implies that the joint information gain \( IG(Y,A,B) \) exceeds the sum of the individual gains. This indicates a positive interaction between features $A$ and $B$ in influencing $Y$, where their combined impact reduces the uncertainty of $Y$ more than their individual effects summed simply.
\end{proof}

\paragraph{Design a feature extractor with \ac{FRPs}.}
If the feature correlation patterns are embedded in the \ac{FRPs}, it is only necessary to sample the combinations of features that are known to be correlated, Because those features that are not correlated do not have interaction effects, there is no need to consider their combinations, as proved in ~\ref{pro_corr}. The sampling scope will be largely limited.

\paragraph{The challenge of designing a feature extractor without FRPs.} When the \ac{FRPs} is unknown correlation patterns among features are unknown, for $N$ features, combinations of different features need to be considered. For example, when sampling one feature, the number of samplings is $C_{N}^{1}$, when sampling two features, it is $C_{N}^{2}$, and when sampling all $N$ features, it is $C_{N}^{N}$. Therefore, for a sample composed of $N$ features, the total number of samplings required is:  $C_{N}^{1}$ + $C_{N}^{2}$ + $C_{N}^{3}$ + $\cdots$ + $C_{N}^{N}$ = $2^N - 1$.

\section{EAPCR recovers the hidden \ac{FRPs}} \label{sec:interp}

The effectiveness of our method lies in its ability to accurately reconstruct the correlation patterns within the data, even when these patterns become less apparent or lost after transformations. As shown in Fig.~\ref{fig:Analysis}, whether the images are in their original state or transformed, the matrices reconstructed by our EAPCR model consistently reflect the true pixel correlation patterns with a recall rate of $84.6\%$. This highlights the model's precision and reliability in restoring these patterns, even when pixel positions are altered.

In a more detailed analysis, we observed a $55.0\%$ recall rate when comparing the correlation matrix of transformed images to the original pixel patterns. Conversely, comparing the correlation matrix from the original images to the transformed image patterns yielded a recall rate of $66.3\%$. This indicates that for data containing different feature correlation patterns, the patterns restored by our model vary significantly. \textbf{This demonstrates that our EAPCR is capable of adaptively restoring the unique hidden relationships between features.} EAPCR shows a strong ability to recover underlying data relationships, further emphasizing its robustness and effectiveness in scenarios where explicit correlation patterns are not directly accessible.

\begin{figure}[!ht]
    \begin{center}
    \includegraphics[width=\textwidth]{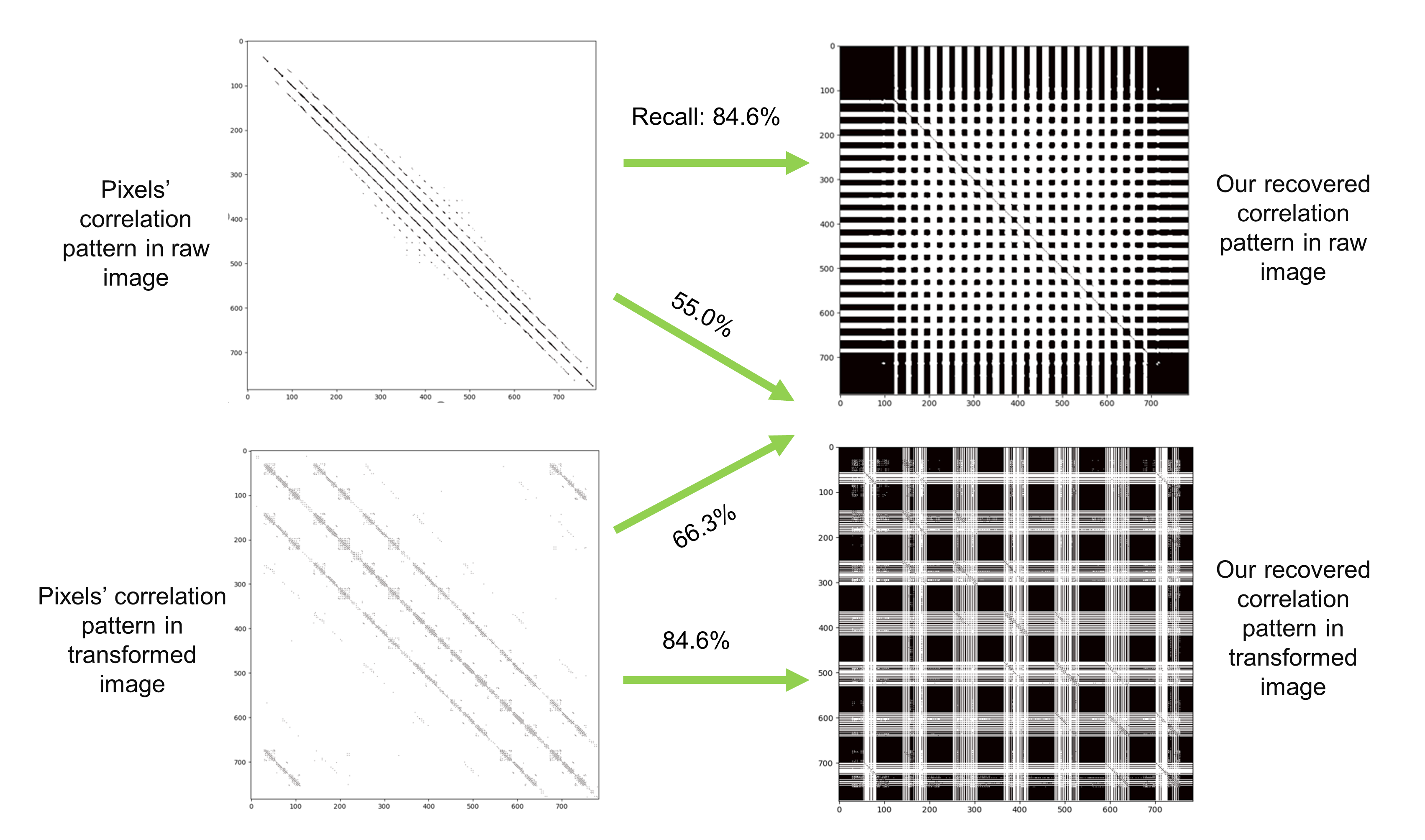}
    \end{center}
    \caption{The alignment between the pixel correlation matrices reconstructed using our technique and the actual pixel correlation patterns in both original and transformed images is demonstrated. The correlation matrices were binarized.}
    \label{fig:Analysis}
\end{figure}

\section{Data and Code}

The public datasets can be found in the corresponding references. Other data and code will be released. 

The experiments were conducted on devices with the following specifications. GPU: \texttt{NVIDIA GeForce RTX 3090} with \texttt{24GB}, \texttt{RTX4080} with \texttt{16GB}, and \texttt{RTX4090} with \texttt{24GB} of VRAM, CPU: \texttt{13th Gen Intel(R) Core(TM) i9-13900K}. The source code is freely available at the \texttt{GitHub} repository after the peer review process.





\end{document}